\documentclass{article} 
\usepackage{iclr2027_conference_preprint,times}

\usepackage{amsmath,amsfonts,bm}

\def\eqref#1{equation~\ref{#1}}

\def\1{\bm{1}}

\DeclareMathAlphabet{\mathsfit}{\encodingdefault}{\sfdefault}{m}{sl}
\SetMathAlphabet{\mathsfit}{bold}{\encodingdefault}{\sfdefault}{bx}{n}

\DeclareMathOperator*{\argmin}{arg\,min}

\usepackage{float}
\usepackage{algorithm}
\usepackage{subcaption}
\usepackage{algpseudocode}
\usepackage[dvipsnames]{xcolor}
\definecolor{citegreen}{RGB}{60,160,90}
\definecolor{reflink}{RGB}{160,50,60}
\usepackage{mathtools}
\usepackage{makecell}
\usepackage{multirow}
\usepackage{graphicx}
\usepackage[colorlinks,linkcolor=reflink,urlcolor=black,citecolor=citegreen]{hyperref}
\usepackage{amsmath}
\usepackage{amsthm}
\usepackage{wrapfig}
\usepackage{booktabs}
\usepackage{amssymb}
\usepackage{pifont}
\usepackage{url}
\usepackage{xcolor}
\usepackage{capt-of}
\usepackage{cleveref}
\crefname{section}{Sec.}{Secs.}\Crefname{section}{Sec.}{Secs.}
\crefname{subsection}{Sec.}{Secs.}\Crefname{subsection}{Sec.}{Secs.}
\crefname{appendix}{App.}{Apps.}\Crefname{appendix}{App.}{Apps.}
\crefname{figure}{Fig.}{Figs.}\Crefname{figure}{Fig.}{Figs.}
\crefname{table}{Tab.}{Tabs.}\Crefname{table}{Tab.}{Tabs.}

\definecolor{revblue}{RGB}{0,70,220}
\newif\ifshowrev
\showrevfalse
\long\def\rev#1{\ifshowrev{\color{revblue}#1}\else#1\fi}
\def\revoff{\ifshowrev\color{black}\fi}

\newcommand{\method}{MaDeL}

\title{\method{}: Manifold-Decomposed Feature Losses \\ for Generative Modeling}

\author{
Beomsu Kim$^{1}$ \quad
Jong Chul Ye$^{1}$\thanks{Co-corresponding authors.} \quad
Kwanyoung Kim$^{2}$\footnotemark[2]
\\
$^{1}$Graduate School of AI, KAIST \\
$^{2}$Department of AI, GIST \\
\texttt{\{beomsu.kim, jong.ye\}@kaist.ac.kr} \\
\texttt{k0.kim@gist.ac.kr}
}

\newcommand{\xmark}{\textcolor{red!45!black}{\ding{55}}}
\newcommand{\cmark}{\textcolor{green!45!black}{\ding{51}}}

\renewcommand{\eqref}[1]{Eq.~(\ref{#1})}

\newcommand{\bmg}{{\bm{g}}}
\newcommand{\bmh}{{\bm{h}}}

\newcommand{\bmr}{{\bm{r}}}

\newcommand{\bmu}{{\bm{u}}}
\newcommand{\bmv}{{\bm{v}}}

\newcommand{\bmx}{{\bm{x}}}
\newcommand{\bmy}{{\bm{y}}}
\newcommand{\bmz}{{\bm{z}}}

\newcommand{\bmtheta}{{\bm{\theta}}}

\newcommand{\bmphi}{{\bm{\phi}}}

\newcommand{\bmA}{{\bm{A}}}
\newcommand{\bmC}{{\bm{C}}}
\newcommand{\bmD}{{\bm{D}}}
\newcommand{\bmE}{{\bm{E}}}
\newcommand{\bmF}{{\bm{F}}}
\newcommand{\bmG}{{\bm{G}}}
\newcommand{\bmI}{{\bm{I}}}
\newcommand{\bmJ}{{\bm{J}}}
\newcommand{\bmP}{{\bm{P}}}
\newcommand{\bmR}{{\bm{R}}}
\newcommand{\bmU}{{\bm{U}}}

\newcommand{\cL}{\mathcal{L}}
\newcommand{\cM}{\mathcal{M}}
\newcommand{\cN}{\mathcal{N}}

\newcommand{\bbE}{\mathbb{E}}

\newcommand{\bbR}{\mathbb{R}}

\newcommand{\ie}{\textit{i.e.}}

\DeclareMathOperator{\Image}{Im}
\DeclareMathOperator{\Span}{Span}
\DeclareMathOperator{\diag}{diag}
\DeclareMathOperator{\rank}{rank}

\DeclareMathOperator{\sg}{sg}
\DeclareMathOperator{\loc}{loc}
\DeclareMathOperator{\iso}{iso}
\DeclareMathOperator{\targ}{targ}

\let\ae\relax
\DeclareMathOperator{\ae}{AE}

\newtheorem{proposition}{Proposition}

\iclrpreprintcopy
\begin{document}

\setcounter{footnote}{1}
\maketitle
\setcounter{footnote}{0}

\begin{figure}[H]
\vspace{-6mm}
\centering
\includegraphics[width=1.0\linewidth]{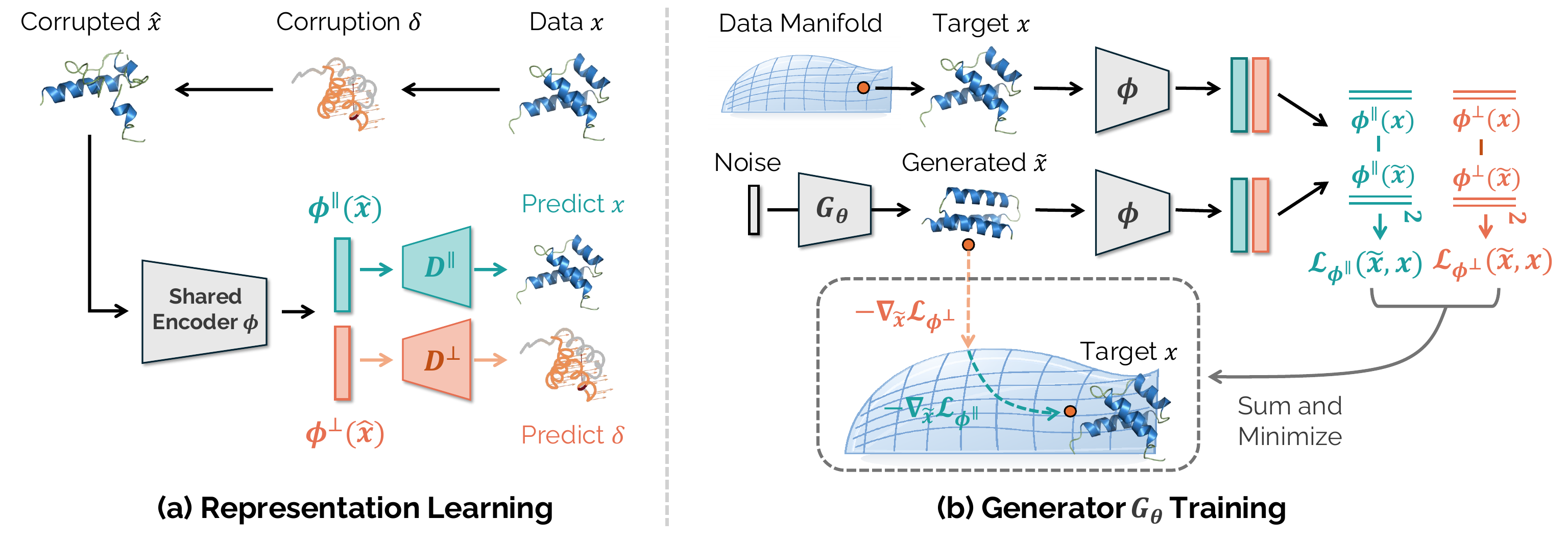}
\vspace{-4mm}
\caption{\textbf{Overview of \method{}.} \textbf{(a)} A shared encoder splits a corrupted observation into manifold-parallel and manifold-orthogonal features, trained with $\bmx$ and $\bm{\delta}$-prediction. \textbf{(b)} These learned features define the decomposed generative loss. The resulting gradients produce complementary tangent and normal updates to the data manifold.}
\label{fig:method_sketch}
\end{figure}

\begin{abstract}

Generative models are often trained with isotropic objectives such as mean-squared error.
For data concentrated near a low-dimensional manifold, however, such losses conflate displacement along the manifold, which may represent valid variation, with displacement away from it, which produces invalid samples.
This mismatch is especially problematic in sparse, highly constrained domains, where ambient-space regression can encourage off-manifold interpolation.
\emph{We ask whether a generative objective can distinguish manifold-parallel variation from manifold-orthogonal deviation directly from data, without explicitly estimating the manifold.}
We introduce a manifold-decomposed feature loss (\method{}) that learns complementary representations from corrupted observations: one is trained to recover the clean sample, while the other is trained to recover the corruption.
We show that, under a feature bottleneck, their Jacobians align with the tangent and normal spaces, exactly for linear manifolds and locally for smooth manifolds.
Together, these representations define an anisotropic objective that separately measures intrinsic variation and off-manifold deviation.
Across synthetic, Earth and climate science, and torsion-angle benchmarks, \method{} improves support recovery and average angular $W_1$ under single-step sampling; on protein backbones, it reduces steric clashes across one- and few-step sampling budgets.
\end{abstract}

\vspace{-5mm}
\section{Introduction} \label{sec:intro}

Generative models have achieved remarkable fidelity across images, video, molecules, and protein structures~\citep{karras2022edm,ho2022video,hoogeboom2022equivariant,yim2023se3}.
Despite substantial differences in architecture and formulation, many modern approaches share a common training principle: they learn a denoising, score, velocity, or reconstruction target using a largely isotropic regression objective.
For example, diffusion and flow-based models commonly minimize an $\ell_2$ discrepancy between predicted and target quantities~\citep{ho2020ddpm,song2021score,lipman2023flow,liu2023rectified}.
Such objectives measure \emph{how far} a prediction is from its target, but are largely agnostic to \emph{the direction in which the prediction is wrong}.

This isotropic treatment is poorly matched to the geometry of natural data.
High-dimensional observations are hypothesized to concentrate near a lower-dimensional, structured subset of the ambient space~\citep{tenenbaum2000isomap,narayanan2010manifold,fefferman2016manifold,pope2021intrinsic}.
Locally, perturbations of equal magnitude can have very different consequences depending on their orientation relative to the data manifold~\citep{rifai2011manifold,alain2014regularized}.
A displacement parallel to the manifold may correspond to meaningful variation, such as a change in object pose or a valid protein conformation, whereas an orthogonal displacement can produce perceptual artifacts or physically invalid structures.
An isotropic loss cannot distinguish between them.

This distinction is especially important for generative modeling.
Near the data manifold, a generator must both suppress deviations away from valid data and model variation along the manifold.
The desired dynamics are therefore inherently anisotropic: \emph{orthogonal directions should restore samples toward the manifold, whereas parallel directions should describe variation within it}.
Recent analyses of diffusion models support this geometric perspective, showing that score functions recover directions associated with the normal bundle of the data manifold at low noise levels~\citep{stanczuk2024diffusion}.
Ignoring this distinction encourages interpolation through off-manifold, low-density regions.

Ideally, a generative objective would therefore measure prediction errors according to the local geometry of the data distribution rather than a fixed isotropic metric.
Explicitly estimating tangent and normal spaces for complex high-dimensional data, however, is challenging.
Instead, we use learned feature spaces to induce data-dependent notions of similarity, building on their successful use as perceptual losses for image synthesis~\citep{johnson2016perceptual,zhang2018unreasonable}.

Concretely, given data $\bmx$, we construct a corrupted observation $\tilde{\bmx}=\bmx+\bm\delta$ and learn two complementary representations.
\emph{Manifold-parallel features} are learned by predicting $\bmx$ from $\tilde{\bmx}$ (\emph{$\bmx$-prediction}), encouraging them to preserve intrinsic variation while suppressing corruption.
\emph{Manifold-orthogonal features} are learned by predicting $\bm\delta$ from the same observation (\emph{$\bm\delta$-prediction}), encouraging them to capture deviations away from the manifold.
Together, these representations provide a learned decomposition of errors into manifold-parallel and orthogonal components.

We use these representations to define a geometry-aware training objective for generative models: rather than penalizing all prediction errors uniformly, it measures them through feature spaces specialized for intrinsic variation and off-manifold deviation, and thus distinguishes predictions that are equally distant in ambient space but differ in their compatibility with the data distribution (\Cref{fig:method_sketch}).

Specifically, our contributions are:
\begin{itemize}
    \item \textbf{Theoretical characterization of learned geometry (\Cref{sec:theory_linear,sec:theory_smooth}).}
    We show that, under suitable bottlenecks, optimal $\bmx$-prediction and $\bm\delta$-prediction encoders have Jacobian row spaces contained in the tangent and normal spaces of the data manifold, respectively, exactly for linear manifolds and to leading order locally for smooth manifolds. Their feature losses yield complementary manifold-parallel and manifold-orthogonal gradients.
    \item \textbf{Manifold-decomposed generative loss (\Cref{sec:autoencoder_to_loss}).}
    We introduce \method{}, a geometry-aware feature loss that separately measures manifold-parallel variation and manifold-orthogonal deviation using these representations, without access to the data manifold.
    \item \textbf{Validation across diverse manifolds (\Cref{sec:experiments}).}
    We evaluate \method{} on a helix, Earth and climate science data, protein and RNA torsion angles, and protein backbone generation. \method{} improves support recovery and average angular $W_1$ on the single-step benchmarks and reduces steric clashes in protein backbones across one- and few-step budgets.
\end{itemize}

\vspace{-3mm}
\section{Background} \label{sec:background}
\vspace{-2mm}

\paragraph{Isotropic regression objectives.}
Let $\hat{\bmx}_{\bmtheta}$ and $\bmx_{\mathrm{targ}}$ denote a model prediction and its regression target. Diffusion and Consistency Models~\citep{song2023consistency,song2024icm}, for example, are commonly trained with isotropic ambient-space objectives of the form
\begin{align}
    \textstyle \cL_{\iso}(\bmtheta)
    \coloneqq
    \bbE[\rho(\|\hat{\bmx}_{\bmtheta}-\bmx_{\mathrm{targ}}\|_2^2)],
\end{align}
where $\rho(u) = u$ yields mean-squared error (MSE) and $\rho(u)=\sqrt{u+c^2}-c$ yields the pseudo-Huber loss.
Although these methods construct $\bmx_{\mathrm{targ}}$ differently, their losses depend only on the residual magnitude and therefore penalize equal-magnitude errors identically regardless of direction.

\paragraph{Geometry-aware generative modeling.}
Natural data often concentrate near low-dimensional subsets of the ambient space, making their local geometry anisotropic: perturbations of equal magnitude can represent meaningful variation when tangent to the data manifold but invalid deviation when normal to it, a distinction ignored by isotropic regression.

\begin{wraptable}{r}{0.53\linewidth}
\vspace{-2mm}
\centering
\scriptsize
\setlength{\tabcolsep}{2.5pt}
\begin{tabular}{lccc}
\toprule
\textbf{Method} & Geometry & Tan.\,/\,Nor. & Train. \\
\midrule
RFM~\citep{chen2024flow} & given & \cmark/\xmark & \cmark \\
RMF~\citep{woo2026riemannianmeanflow} & given & \cmark/\xmark & \cmark \\
Tango-DM~\citep{liu2025tango} & given & \cmark/\xmark & \cmark \\
SDDM~\citep{sun2023sddm} & pretrained & \cmark/\cmark & \xmark \\
MPGD~\citep{he2024mpgd} & pretrained & \xmark/\cmark & \xmark \\
Saito \& Matsubara~\citeyearpar{saito2025tangential} & pretrained & \cmark/\cmark & \xmark \\
AYT~\citep{kim2026ayt} & learned & \xmark/\cmark & \cmark \\
\midrule
\method{} (Ours) & learned & \cmark/\cmark & \cmark \\
\bottomrule
\end{tabular}
\vspace{-0.5em}
\caption{Geometry-aware methods: source of geometry, tangent/normal modeling, and use in training.}
\label{tab:related}
\vspace{-5mm}
\end{wraptable}
\Cref{tab:related} summarizes how prior work accounts for this geometry.
Existing approaches primarily incorporate the geometry of a known state space: Riemannian diffusion models~\citep{huang2022riemannian} and Riemannian Flow Matching (RFM)~\citep{chen2024flow} extend generative dynamics to non-Euclidean manifolds, while MeanFlow (MF)~\citep{geng2025meanflow} and Riemannian MeanFlow (RMF)~\citep{woo2026riemannianmeanflow} enable one-step generation in Euclidean and manifold-valued spaces, respectively, and Tango-DM trains only the tangential score component of a known manifold~\citep{liu2025tango}.
Other work targets the geometry of the embedded data distribution, but extracts it from a pretrained model: low-noise diffusion scores approximate its normal bundle~\citep{stanczuk2024diffusion}, SDDM decomposes scores into tangent and normal components~\citep{sun2023sddm}, MPGD constructs manifold-preserving guidance from pretrained autoencoders~\citep{he2024mpgd}, and the score Jacobian yields a tangent-aware metric~\citep{saito2025tangential}.
In contrast, we learn complementary tangent- and normal-sensitive features from data and place them directly in the training objective, with neither a known manifold nor a pretrained models.

\paragraph{Feature losses as learned directional metrics.}
Learned features introduce anisotropy without an explicit parameterization of the data manifold: perceptual losses measure errors in representations aligned with perceptual similarity~\citep{johnson2016perceptual,zhang2018unreasonable}, and regularized autoencoders capture local geometry through denoising, Jacobian contraction, or reconstruction fields~\citep{vincent2008extracting,rifai2011contractive,alain2014regularized}. Features have also been incorporated directly into generative training through pretrained discriminator features~\citep{sauer2021projected} and feature-based consistency objectives~\citep{song2023consistency}, including manifold-aligned tangents (AYT)~\citep{kim2026ayt}. These approaches, however, provide a single notion of similarity rather than complementary sensitivities to intrinsic variation and off-manifold deviation.

\begin{figure}[t]
\centering
\includegraphics[width=1.0\linewidth]{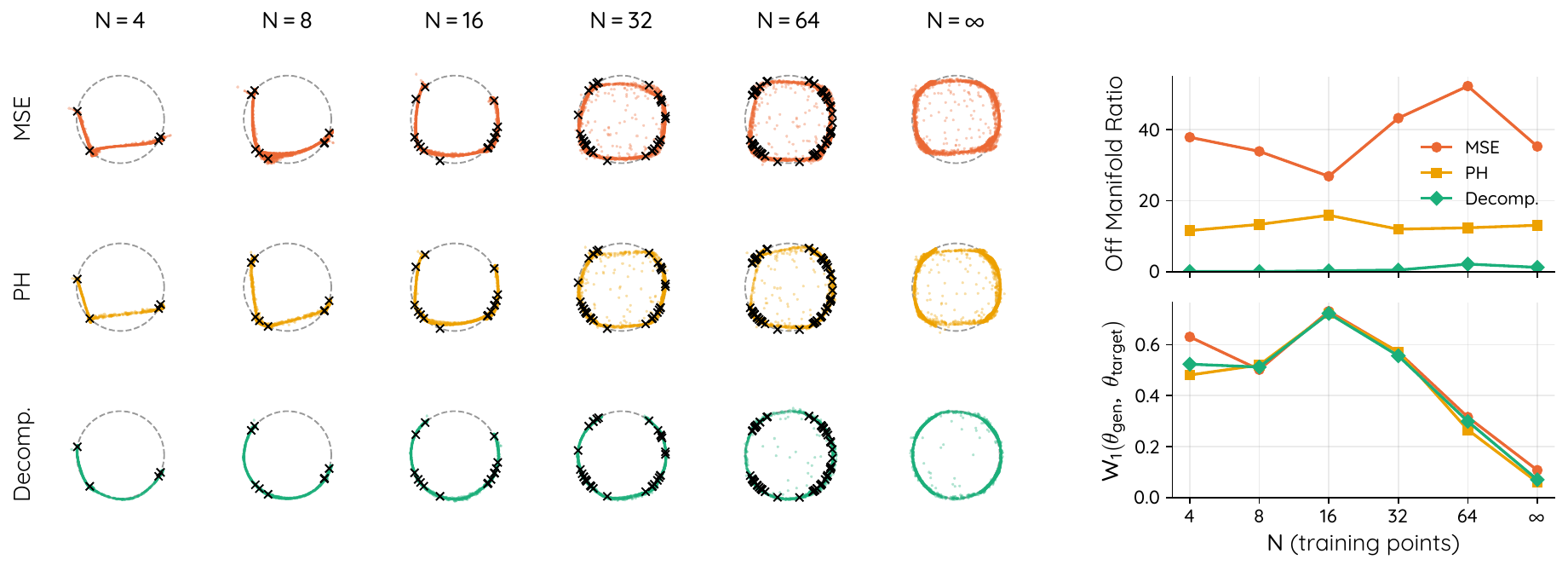}
\vspace{-6mm}
\caption{\textbf{A manifold-decomposed metric mitigates off-manifold interpolation.} With sparse training samples (crosses), Cartesian MSE and Pseudo-Huber interpolate through the circle, whereas the oracle decomposed metric preserves its support.}
\vspace{-4mm}
\label{fig:toy_sweep}
\end{figure}

\vspace{-2mm}
\section{Motivation} \label{sec:motivation}

Let $\bmphi:\bbR^d\rightarrow\bbR^n$ be a feature encoder and define $\cL_{\bmphi}(\hat{\bmx},\bmx)\coloneqq\frac{1}{2} \|\bmphi(\hat{\bmx})-\bmphi(\bmx)\|_2^2$. Its gradient is $\bmJ_{\bmphi}(\hat{\bmx})^\top(\bmphi(\hat{\bmx})-\bmphi(\bmx))$, so the encoder Jacobian determines the input-space directions to which the loss is sensitive. We learn complementary encoders $\bmphi^\parallel$ and $\bmphi^\perp$ for manifold-parallel variation and manifold-orthogonal deviation, respectively. Writing $a\in{\parallel,\perp}$ for the two branches, we define
\begin{align}
\cL(\hat{\bmx},\bmx)
&=\textstyle \sum_{a\in{\parallel,\perp}}\lambda^a
\cL_{\bmphi^a}(\hat{\bmx},\bmx),
\label{eq:decomposed_loss} \\
\nabla_{\hat{\bmx}}\cL(\hat{\bmx},\bmx)
&=\textstyle \sum_{a\in{\parallel,\perp}}\lambda^a
\bmJ_{\bmphi^a}(\hat{\bmx})^\top
\left(\bmphi^a(\hat{\bmx})-\bmphi^a(\bmx)\right).
\label{eq:feature_gradient}
\end{align}
If the encoders $\bmphi^\parallel$ and $\bmphi^\perp$ specialize as intended, these terms provide complementary update directions along and away from the data manifold, inducing a learned directional geometry rather than a generic perceptual metric.

\paragraph{Sanity check on a toy example.}

We first isolate the effect of the training metric on a manifold that is known exactly, using an oracle decomposition that the learned features of \Cref{sec:method} are designed to approximate. The target distribution is a mixture of four von Mises distributions on the unit circle in $\bbR^2$. For each dataset size $N$, identical generators are trained with the same OT-matched objective and differ only in the regression metric; the crosses in \Cref{fig:toy_sweep} denote the training samples, and $N=\infty$ denotes training with fresh samples from the underlying distribution.\revoff

As shown in \Cref{fig:toy_sweep} left, Cartesian MSE and Pseudo-Huber (PH) interpolate between sparse modes along Euclidean chords and produce off-manifold samples inside the circle. The oracle decomposed metric applies PH in polar coordinates with $(\lambda^\parallel,\lambda^\perp) = (1,5)$, where angle and radius correspond to the manifold-parallel and manifold-orthogonal directions, and preserves the support.

\Cref{fig:toy_sweep} right quantifies this behavior: the decomposed metric reduces the fraction of off-manifold samples to nearly zero across dataset sizes while attaining a comparable angular $W_1$ distance to the target distribution. The improvement therefore arises from the manifold-orthogonal geometry without loss of variation along the manifold, which underscores the role of the training metric.

\vspace{-2mm}
\section{Main Contribution: \method{}} \label{sec:method}

Building on \Cref{sec:motivation}, we show that the two feature spaces arise from $\bmx$-prediction and $\bm\delta$-prediction on corrupted data, and turn them into a two-stage training procedure. We characterize the optimal encoders exactly for linear manifolds (\Cref{sec:theory_linear}), extend the characterization locally to smooth manifolds (\Cref{sec:theory_smooth}), connect the encoder Jacobians to the update directions of the decomposed loss, and describe the shared encoder and generator training (\Cref{sec:autoencoder_to_loss}).

\subsection{Theory: Linear Manifolds and Autoencoders} \label{sec:theory_linear}

Suppose the data $\bmx$ is supported on an $r$-dimensional linear subspace of $\bbR^d$. Concretely,
\begin{align}
    \bmx &= \bmA \bmz, \quad \bmz \sim \cN(\bm{0},\bmI), \quad r \coloneqq \rank(\bmA) < d,
\end{align}
such that data $\bmx$ is Gaussian distributed with covariance
\begin{align}
    \bm{\Sigma} \coloneqq \bmA \bmA^\top = \bmU \diag(\lambda_1,\ldots,\lambda_d) \bmU^\top,
\end{align}
with eigenvalues $\lambda_1 \geq \cdots \geq \lambda_r > 0 = \lambda_{r+1} = \cdots = \lambda_d$.
We consider linear encoder-decoder pairs $(\bmE,\bmD)$ with bottleneck dimensions $n_\parallel$ for $\bmx$-prediction and $n_\perp$ for $\bm\delta$-prediction, together with isotropic corruption $\bm{\delta} \sim \cN(\bm{0},\sigma^2\bmI)$.

\begin{proposition}[$\bmx$-prediction selects high-variance directions]
\label{prop:linear_parallel}
Consider the denoising objective
\begin{align}
    \textstyle \min_{\bmE,\bmD} \bbE_{\bmx,\bm{\delta}} \| \bmx - \bmD \bmE (\bmx + \bm{\delta}) \|_2^2, \quad n_\parallel \leq r.
\end{align}
Assume there are no ties between positive eigenvalues, \ie, $\lambda_{n_\parallel} > \lambda_{n_\parallel+1}$, and let $\bmU_{n_\parallel}$ contain the eigenvectors corresponding to the $n_\parallel$ largest eigenvalues of $\bm{\Sigma}$. Then the optimal composite map is
\begin{gather}
    \textstyle \bmD^\star \bmE^\star = \bmU_{n_\parallel} \bm{\Gamma}_{\parallel} \bmU_{n_\parallel}^\top, \quad  \bm{\Gamma}_{\parallel} \coloneqq \diag (\frac{\lambda_1}{\lambda_1+\sigma^2}, \ldots, \frac{\lambda_{n_\parallel}}{\lambda_{n_\parallel}+\sigma^2}).
\end{gather}
Moreover, every optimal rank-$n_\parallel$ factorization can be written as $\bmE^\star = \bmR \bmU_{n_\parallel}^\top$, $\bmD^\star = \bmU_{n_\parallel} \bm{\Gamma}_{\parallel} \bmR^{-1}$
for an arbitrary invertible matrix $\bmR \in \bbR^{n_\parallel \times n_\parallel}$.
\end{proposition}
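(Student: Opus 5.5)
The plan is to reduce the problem to a reduced-rank linear regression and then solve it with an Eckart--Young argument. Write $\bmM \coloneqq \bmD\bmE$, a $d\times d$ matrix of rank at most $n_\parallel$. Let $\tilde{\bmx} = \bmx+\bm\delta$. Because $\bmx$ and $\bm\delta$ are independent and zero-mean, the second moments are $\bbE[\tilde{\bmx}\tilde{\bmx}^\top] = \bm\Sigma+\sigma^2\bmI \eqqcolon \bm\Sigma_{\tilde x}$ (invertible since $\sigma>0$) and $\bbE[\bmx\tilde{\bmx}^\top] = \bm\Sigma$. The objective then becomes
\begin{align}
    f(\bmM) = \operatorname{tr}(\bm\Sigma) - 2\operatorname{tr}(\bmM\bm\Sigma) + \operatorname{tr}(\bmM\bm\Sigma_{\tilde x}\bmM^\top).
\end{align}
Completing the square with the unconstrained minimizer $\bmM_0 = \bm\Sigma\bm\Sigma_{\tilde x}^{-1} = \bmU\diag\bigl(\tfrac{\lambda_i}{\lambda_i+\sigma^2}\bigr)\bmU^\top$ gives
\begin{align}
    f(\bmM) = \text{const} + \bigl\|(\bmM-\bmM_0)\bm\Sigma_{\tilde x}^{1/2}\bigr\|_F^2 .
\end{align}

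Next I change variables to $\bmN = \bmM\bm\Sigma_{\tilde x}^{1/2}$. This is a bijection that preserves rank, so the problem becomes a best rank-$n_\parallel$ approximation of
\begin{align}
    \bmM_0\bm\Sigma_{\tilde x}^{1/2} = \bmU\diag\Bigl(\tfrac{\lambda_i}{\sqrt{\lambda_i+\sigma^2}}\Bigr)\bmU^\top
\end{align}
in Frobenius norm. This matrix is symmetric PSD, and its singular values $\lambda_i/\sqrt{\lambda_i+\sigma^2}$ are strictly increasing functions of $\lambda_i$. So the ordering of the $\lambda_i$ is preserved, and the gap $\lambda_{n_\parallel}>\lambda_{n_\parallel+1}$ turns into a strict singular-value gap. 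By Eckart--Young--Mirsky, the unique minimizer is the truncation $\bmN^\star = \bmU_{n_\parallel}\diag\bigl(\lambda_i/\sqrt{\lambda_i+\sigma^2}\bigr)_{i\le n_\parallel}\bmU_{n_\parallel}^\top$. Multiplying on the right by $\bm\Sigma_{\tilde x}^{-1/2}$, which is diagonal in the same basis, yields $\bmM^\star = \bmU_{n_\parallel}\bm\Gamma_\parallel\bmU_{n_\parallel}^\top$, as claimed.

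For the factorization statement, take any optimal pair $(\bmE^\star,\bmD^\star)$ with $\bmE^\star\in\bbR^{n_\parallel\times d}$ and $\bmD^\star\in\bbR^{d\times n_\parallel}$. Their product has rank exactly $n_\parallel$ because every entry of $\bm\Gamma_\parallel$ is positive (here $n_\parallel\le r$ ensures $\lambda_i>0$ for $i\le n_\parallel$). Hence both factors have full rank $n_\parallel$. The row space of $\bmE^\star$ contains the row space of the product, which is $\Span(\bmU_{n_\parallel})$, and both have dimension $n_\parallel$, so they coincide. Therefore $\bmE^\star = \bmR\bmU_{n_\parallel}^\top$ for some invertible $\bmR$. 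Then $\bmD^\star\bmR\bmU_{n_\parallel}^\top = \bmU_{n_\parallel}\bm\Gamma_\parallel\bmU_{n_\parallel}^\top$, and right-multiplying by $\bmU_{n_\parallel}$ (using $\bmU_{n_\parallel}^\top\bmU_{n_\parallel}=\bmI$) gives $\bmD^\star = \bmU_{n_\parallel}\bm\Gamma_\parallel\bmR^{-1}$. Conversely, every such pair attains the optimal product, so this parametrization covers exactly the optimal factorizations.

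The main obstacle is uniqueness, which in turn is what makes the factorization claim valid. Eckart--Young gives a unique minimizer only under a strict gap in the singular values of $\bmM_0\bm\Sigma_{\tilde x}^{1/2}$, so I must check that the eigenvalue gap transfers through the monotone map $\lambda\mapsto\lambda/\sqrt{\lambda+\sigma^2}$. This requires that the weighting by $\bm\Sigma_{\tilde x}^{1/2}$ commutes with $\bm\Sigma$. It does, since both are diagonal in the basis $\bmU$, and that is the point that keeps the weighted low-rank problem tractable.
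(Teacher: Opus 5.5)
Your proposal is correct and follows essentially the same route as the paper: reduce to the composite map $\bmD\bmE$, complete the square, whiten by $(\bm{\Sigma}+\sigma^2\bmI)^{1/2}$, apply Eckart--Young--Mirsky with the eigengap carried through the strictly increasing map $\lambda\mapsto\lambda/\sqrt{\lambda+\sigma^2}$, and then unwhiten. The only differences are cosmetic: you pin down the factorization through the row space of $\bmE^\star$, whereas the paper uses the column space of $\bmD^\star$, and you also state the converse that every such pair is optimal.
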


Proposition~\ref{prop:linear_parallel} shows that $\bmx$-prediction allocates the limited representation capacity to directions of largest data variance.
Importantly, the encoder is insensitive to every direction outside the selected principal subspace: $\bmE^\star \bmv = \bm{0}$ for all $\bmv \perp \Span(\bmU_{n_\parallel})$.

For the linear data manifold, $\ker(\bm{\Sigma})$ is exactly its normal space; hence, all manifold-orthogonal directions are annihilated by the $\bmx$-prediction encoder.
The corruption in the input is essential for this result: unlike ordinary autoencoding of $\bmx$ from $\bmx$, denoising exposes the encoder to off-manifold directions while requiring the output to remain unchanged.

\begin{proposition}[$\bm\delta$-prediction selects low-variance directions]
\label{prop:linear_orthogonal}
Consider instead the objective
\begin{align}
    \textstyle \min_{\bmE,\bmD} \bbE_{\bmx,\bm{\delta}} \| \bm{\delta} - \bmD \bmE(\bmx+\bm{\delta}) \|_2^2.
\end{align}
Let $\bmU_{-n_\perp}$ contain eigenvectors corresponding to the $n_\perp$ smallest eigenvalues $\lambda_{d-n_\perp+1},\ldots,\lambda_d$ of $\bm{\Sigma}$. Then the optimal composite map is
\begin{gather}
    \textstyle \bmD^\star \bmE^\star = \bmU_{-n_\perp} \bm{\Gamma}_{\perp} \bmU_{-n_\perp}^\top, \quad \bm{\Gamma}_{\perp} \coloneqq \diag(\frac{\sigma^2}{\lambda_{d-n_\perp+1}+\sigma^2}, \ldots, \frac{\sigma^2}{\lambda_d+\sigma^2}),
\end{gather}
and every optimal rank-$n_\perp$ factorization can be written as $\bmE^\star = \bmR \bmU_{-n_\perp}^\top$, $\bmD^\star = \bmU_{-n_\perp} \bm{\Gamma}_{\perp} \bmR^{-1}$ for an arbitrary invertible matrix $\bmR \in \bbR^{n_\perp \times n_\perp}$.
\end{proposition}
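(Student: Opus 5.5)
We reduce the problem to a reduced-rank regression and then apply Eckart--Young, following the same route as Proposition~\ref{prop:linear_parallel}. Write $\bmM \coloneqq \bmD\bmE$, which ranges over all $d\times d$ matrices of rank at most $n_\perp$. Set $\tilde{\bmx}=\bmx+\bm\delta$. Because $\bmx$ and $\bm\delta$ are independent zero-mean Gaussians, the second moments are
\begin{align}
\bbE[\tilde{\bmx}\tilde{\bmx}^\top]=\bm\Sigma+\sigma^2\bmI \eqqcolon \bm\Sigma_{\tilde x},
\qquad
\bbE[\bm\delta\tilde{\bmx}^\top]=\sigma^2\bmI.
\end{align}
Expanding the objective gives
\begin{align}
J(\bmM)=\mathrm{tr}\bigl(\sigma^2\bmI-2\sigma^2\bmM+\bmM\bm\Sigma_{\tilde x}\bmM^\top\bigr).
\end{align}
Completing the square, with $\bmW^\star=\sigma^2\bm\Sigma_{\tilde x}^{-1}$ the unconstrained Wiener filter, yields
\begin{align}
J(\bmM)=J(\bmW^\star)+\bigl\|(\bmM-\bmW^\star)\bm\Sigma_{\tilde x}^{1/2}\bigr\|_F^2 .
\end{align}
Here $\bm\Sigma_{\tilde x}\succ 0$ since $\sigma>0$, so $\bmW^\star$ is well defined.

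The problem is now to minimize $\|\bmM\bm\Sigma_{\tilde x}^{1/2}-\bmW^\star\bm\Sigma_{\tilde x}^{1/2}\|_F$ over rank-$\le n_\perp$ matrices $\bmM$. The map $\bmM\mapsto \bmN=\bmM\bm\Sigma_{\tilde x}^{1/2}$ is a rank-preserving bijection, so we may optimize over $\bmN$ instead. The target is
\begin{align}
\bmW^\star\bm\Sigma_{\tilde x}^{1/2}=\sigma^2\bm\Sigma_{\tilde x}^{-1/2}
=\bmU\diag\Bigl(\tfrac{\sigma^2}{\sqrt{\lambda_i+\sigma^2}}\Bigr)\bmU^\top .
\end{align}
This is symmetric PSD, so its SVD is an eigendecomposition with singular values $\sigma^2/\sqrt{\lambda_i+\sigma^2}$. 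These are decreasing in $\lambda_i$, so the largest ones correspond to the \emph{smallest} eigenvalues of $\bm\Sigma$. By Eckart--Young, the optimal $\bmN$ truncates to those $n_\perp$ directions:
\begin{align}
\bmN^\star=\bmU_{-n_\perp}\diag\bigl(\sigma^2/\sqrt{\lambda_i+\sigma^2}\bigr)\bmU_{-n_\perp}^\top .
\end{align}
Multiplying on the right by $\bm\Sigma_{\tilde x}^{-1/2}$, which is diagonal in the same basis, gives the stated composite map $\bmM^\star=\bmU_{-n_\perp}\bm\Gamma_\perp\bmU_{-n_\perp}^\top$.

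The factorization claim follows once we know the composite map has exactly rank $n_\perp$, which holds because every $\sigma^2/(\lambda_i+\sigma^2)>0$. Then $\Image\bmE^\star{}^\top$ is the row space of $\bmM^\star$, namely $\Span(\bmU_{-n_\perp})$, so $\bmE^\star=\bmR\bmU_{-n_\perp}^\top$ for some invertible $\bmR$. Solving $\bmD^\star\bmR\bmU_{-n_\perp}^\top=\bmM^\star$ then forces $\bmD^\star=\bmU_{-n_\perp}\bm\Gamma_\perp\bmR^{-1}$.

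The main obstacle is uniqueness of the optimal subspace, which Eckart--Young only guarantees when the $n_\perp$-th and $(n_\perp+1)$-th singular values of the target are distinct. Here that means $\lambda_{d-n_\perp}>\lambda_{d-n_\perp+1}$. When $n_\perp\le d-r$, all selected eigenvalues are zero and this gap condition typically fails, yet the singular values are all tied at $\sigma$. I would handle this by stating a no-tie assumption analogous to Proposition~\ref{prop:linear_parallel}. Alternatively, one can interpret $\bmU_{-n_\perp}$ as \emph{any} orthonormal choice of eigenvectors inside the relevant eigenspace. The formula then still holds, since $\bm\Gamma_\perp=\bmI$ on $\ker\bm\Sigma$, and the key conclusion survives: the encoder's row space lies in the normal space $\ker\bm\Sigma$ whenever $n_\perp\le d-r$.
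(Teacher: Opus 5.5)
Your proposal is correct and follows essentially the same route as the paper: complete the square to reduce to the best rank-$n_\perp$ approximation of $\sigma^2\bm\Sigma_{\tilde x}^{-1/2}$ (the paper's $\sigma^2\bmC^{-1/2}$), apply Eckart--Young, and recover every factorization from the fact that the composite map has rank exactly $n_\perp$. The only difference is that you read $\bmE^\star$ off the row space while the paper reads $\bmD^\star$ off the image, which is equivalent; your handling of ties also matches the paper's closing remark that $\bmU_{-n_\perp}$ may be any orthonormal basis of an optimal subspace within the tied eigenspace (more precisely, the gap $\lambda_{d-n_\perp}>\lambda_{d-n_\perp+1}$ fails exactly when $n_\perp<d-r$ and holds when $n_\perp=d-r$).
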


In contrast to $\bmx$-prediction, $\bm\delta$-prediction prioritizes directions of smallest data variance. In particular, if $n_\perp \leq d-r$, the selected directions can lie entirely within $\ker(\bm{\Sigma})$, which is exactly the normal space of the data manifold. When $n_\perp=d-r$, the encoder spans the complete normal space. Thus, under a suitable bottleneck, the two objectives induce complementary representations: $\bmx$-prediction selects manifold-parallel variation, whereas $\bm\delta$-prediction selects manifold-orthogonal variation.

The distinction can also be seen directly from their direction-wise gains. For an eigen-direction with data variance $\lambda$, $\bmx$-prediction and $\bm\delta$-prediction assign $g_{\parallel}(\lambda) = \frac{\lambda}{\lambda+\sigma^2}$ and $g_{\perp}(\lambda) = \frac{\sigma^2}{\lambda+\sigma^2}$
which satisfy $g_{\parallel}(\lambda) + g_{\perp}(\lambda) = 1$.
$\bmx$-prediction therefore increasingly favors high-variance directions, while $\bm\delta$-prediction increasingly favors low-variance directions.

\subsection{Theory: General Smooth Manifolds and Autoencoders} \label{sec:theory_smooth}

We now extend the linear analysis to a general smooth data manifold with smooth autoencoder.
Let $\cM \subset \bbR^d$ be an $r$-dimensional smooth embedded manifold and consider a point $\bmx_0 \in \cM$.
We parameterize nearby clean samples on the manifold as $\bmx_\epsilon = \iota(\exp_{\bmx_0}(\epsilon \bm{\xi}))$, where the random tangent vector $\bm{\xi} \in T_{\bmx_0}\cM$ possesses the first and second moments
\begin{align}
    \bbE[\bm{\xi}] = \bm{0}, \quad \bbE[\bm{\xi}\bm{\xi}^\top] = \bm{\Sigma}_T,
\end{align}
and $\iota : \cM \rightarrow \bbR^d$ denotes the embedding. Let $\bmJ_\iota \coloneqq d\iota_{\bmx_0}$ and
\begin{align}
    \bmC_T \coloneqq \bmJ_\iota \bm{\Sigma}_T \bmJ_\iota^\top = \bmU \diag(\lambda_1,\ldots,\lambda_d) \bmU^\top, \quad \lambda_1 \geq \cdots \geq \lambda_r > 0 = \lambda_{r+1}=\cdots=\lambda_d,
\end{align}
so that $\Image(\bmC_T)=T_{\bmx_0}\cM$ and $\ker(\bmC_T)=N_{\bmx_0}\cM$ whenever $\bm{\Sigma}_T$ has full rank on $T_{\bmx_0}\cM$. As before, independent isotropic corruption $\bm{\delta}_\sigma \sim \cN(\bm{0},\sigma^2 \bmI)$ is added to $\bmx_\epsilon$.

For each objective, let the encoder $\bmE$ and decoder $\bmD$ be twice continuously differentiable, with bottleneck dimension $n_\parallel$ for $\bmx$-prediction and $n_\perp$ for $\bm\delta$-prediction.
Denote their local Jacobians by
\begin{align}
    \bmE_1 \coloneqq d\bmE_{\bmx_0}, \quad \bmD_1 \coloneqq d\bmD_{\bmE(\bmx_0)}.
\end{align}
The key observation is that, after first-order expansion around $\bmx_0$, both objectives reduce exactly to the corresponding linear problems addressed in \Cref{sec:theory_linear}, with the local tangent covariance $\bmC_T$ replacing the global covariance $\bm{\Sigma}$.

\begin{proposition}[Local behavior of $\bmx$-prediction]
\label{prop:general_parallel}
Assume $\bmD(\bmE(\bmx_0))=\bmx_0$. As $\epsilon , \sigma \rightarrow 0$,
\begin{align}
    \bbE_{\bm{\xi},\bm{\delta}_\sigma} \| \bmx_\epsilon - \bmD(\bmE(\bmx_\epsilon+\bm{\delta}_\sigma)) \|_2^2 =
    \bbE_{\bm{\xi},\bm{\delta}_\sigma} \| \epsilon \bmJ_\iota \bm{\xi} - \bmD_1 \bmE_1(\epsilon \bmJ_\iota\bm{\xi} + \bm{\delta}_\sigma) \|_2^2 + o(\epsilon^2+\sigma^2).
\end{align}
For $n_\parallel\rev{\leq} r$, the leading-order objective is minimized by
\begin{gather}
    \textstyle \bmD_1^\star \bmE_1^\star = \bmU_{n_\parallel} \bm{\Gamma}_{\parallel}^{\loc} \bmU_{n_\parallel}^\top, \quad \bm{\Gamma}_{\parallel}^{\loc} \coloneqq \diag(\frac{\epsilon^2\lambda_1}{\epsilon^2\lambda_1+\sigma^2}, \ldots, \frac{\epsilon^2\lambda_{n_\parallel}}{\epsilon^2\lambda_{n_\parallel}+\sigma^2}),
\end{gather}
with $\bmE_1^\star = \bmR \bmU_{n_\parallel}^\top$, $\bmD_1^\star = \bmU_{n_\parallel} \bm{\Gamma}_{\parallel}^{\loc} \bmR^{-1}$ for an arbitrary invertible matrix $\bmR \in \bbR^{n_\parallel \times n_\parallel}$.
\end{proposition}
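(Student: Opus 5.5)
The proof has two parts: a Taylor expansion that reduces the nonlinear objective to a linear one, followed by an appeal to Proposition~\ref{prop:linear_parallel} with a rescaled covariance.

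\textbf{Expansion of the residual.} Since $\exp_{\bmx_0}(\epsilon\bm{\xi})$ is smooth in $\epsilon$ and $\iota$ is an embedding,
\begin{align}
\bmx_\epsilon = \bmx_0 + \epsilon\bmJ_\iota\bm{\xi} + O(\epsilon^2\|\bm{\xi}\|^2).
\end{align}
The $O(\epsilon^2)$ term is the second fundamental form contribution, which is normal, but its direction does not matter at this order. Write $\bm{h}\coloneqq \epsilon\bmJ_\iota\bm{\xi}+\bm{\delta}_\sigma + O(\epsilon^2)$ for the input perturbation. Because $\bmE$ and $\bmD$ are $C^2$, the composite $\bmD\circ\bmE$ is $C^2$ near $\bmx_0$, and the chain rule gives $d(\bmD\circ\bmE)_{\bmx_0}=\bmD_1\bmE_1$. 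Combining this with $\bmD(\bmE(\bmx_0))=\bmx_0$ yields
\begin{align}
\bmD(\bmE(\bmx_0+\bm{h})) = \bmx_0 + \bmD_1\bmE_1\bm{h} + O(\|\bm{h}\|^2).
\end{align}
Subtracting the two expansions, the residual equals
\begin{align}
\bmr_1 + \bmr_2, \qquad \bmr_1\coloneqq \epsilon\bmJ_\iota\bm{\xi}-\bmD_1\bmE_1(\epsilon\bmJ_\iota\bm{\xi}+\bm{\delta}_\sigma),
\end{align}
where $\bmr_2 = O(\epsilon^2\|\bm{\xi}\|^2+\|\bm{\delta}_\sigma\|^2)$. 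The first-order term $\bmr_1$ is $O(\epsilon+\sigma)$ in $L^2$.

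\textbf{Controlling the remainder.} Squaring and taking expectations,
\begin{align}
\bbE\|\bmr_1+\bmr_2\|^2 = \bbE\|\bmr_1\|^2 + 2\bbE\langle \bmr_1,\bmr_2\rangle + \bbE\|\bmr_2\|^2 .
\end{align}
The last two terms are $O((\epsilon+\sigma)^3)$, hence $o(\epsilon^2+\sigma^2)$. This gives the claimed identity.

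This step is the main obstacle. Taylor remainders are only local, so we need uniform control in expectation. The plan is to assume $\bm{\xi}$ has finite fourth moments; if the Gaussian tails of $\bm{\delta}_\sigma$ leave the Taylor neighborhood, we additionally assume $\bmD\circ\bmE$ has globally bounded second derivatives. We then bound the remainder in expectation using Cauchy–Schwarz, with Gaussian moments $\bbE\|\bm{\delta}_\sigma\|^4=O(\sigma^4)$ and the fourth moments of $\bm{\xi}$.

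\textbf{Minimizing the leading term.} Set $\bmx'\coloneqq\epsilon\bmJ_\iota\bm{\xi}$. This vector has zero mean and covariance $\epsilon^2\bmC_T = \bmU\diag(\epsilon^2\lambda_i)\bmU^\top$, and it is independent of $\bm{\delta}_\sigma$. The leading-order objective is a function only of the linear map $\bmD_1\bmE_1$ and of these second moments:
\begin{align}
\bbE\|(\bmI-\bmD_1\bmE_1)\bmx'-\bmD_1\bmE_1\bm{\delta}_\sigma\|^2 .
\end{align}
It is therefore identical to the objective of Proposition~\ref{prop:linear_parallel}, with $\bm{\Sigma}$ replaced by $\epsilon^2\bmC_T$. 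The Gaussianity of $\bmz$ played no role in that proof; only second moments entered. The nonlinear maps $\bmE$ and $\bmD$ enter solely through their Jacobians, and every rank-$\le n_\parallel$ factorization $(\bmD_1,\bmE_1)$ of size $d\times n_\parallel$ and $n_\parallel\times d$ is realizable. Applying Proposition~\ref{prop:linear_parallel} with eigenvalues $\epsilon^2\lambda_i$, the no-tie condition is unchanged by the scaling. This yields the stated $\bm{\Gamma}_\parallel^{\loc}$ and the characterization $\bmE_1^\star=\bmR\bmU_{n_\parallel}^\top$, $\bmD_1^\star=\bmU_{n_\parallel}\bm{\Gamma}_\parallel^{\loc}\bmR^{-1}$.
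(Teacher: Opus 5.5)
Your proposal is correct and follows essentially the same route as the paper's proof. You expand $\bmx_\epsilon$ and $\bmD\circ\bmE$ to first order about $\bmx_0$, show that the squared remainder and its cross term with the linear residual are $o(\epsilon^2+\sigma^2)$, and then apply Proposition~\ref{prop:linear_parallel} with clean second-moment matrix $\epsilon^2\bmC_T$. Beyond that, your additions (the Cauchy--Schwarz cross-term bound under explicit fourth-moment and bounded-Hessian assumptions, and the observations that only second moments enter Proposition~\ref{prop:linear_parallel}, that the eigengap survives scaling by $\epsilon^2$, and that every factorization $(\bmD_1,\bmE_1)$ is realizable) make explicit what the paper's proof leaves implicit.
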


Thus, to leading order, the Jacobian of the $\bmx$-prediction encoder lies in the local tangent space and is insensitive to normal perturbations: $\bmE_1^\star \bmv = 0$ for all $\bmv \in N_{\bmx_0}\cM$.

\begin{proposition}[Local behavior of $\bm\delta$-prediction]
\label{prop:general_orthogonal}
Assume $\bmD(\bmE(\bmx_0)) = \bm{0}$. As $\epsilon,\sigma\rightarrow 0$,
\begin{align}
    \bbE_{\bm{\xi},\bm{\delta}_\sigma} \| \bm{\delta}_\sigma - \bmD(\bmE(\bmx_\epsilon+\bm{\delta}_\sigma)) \|_2^2 = \bbE_{\bm{\xi},\bm{\delta}_\sigma}
    \| \bm{\delta}_\sigma - \bmD_1 \bmE_1(\epsilon \bmJ_\iota\bm{\xi}+\bm{\delta}_\sigma) \|_2^2 + o(\epsilon^2+\sigma^2).
\end{align}
The leading-order objective is minimized by
\begin{align}
    \textstyle \bmD_1^\star \bmE_1^\star = \bmU_{-n_\perp} \bm{\Gamma}_{\perp}^{\loc} \bmU_{-n_\perp}^\top, \quad \bm{\Gamma}_{\perp}^{\loc} \coloneqq \diag(\frac{\sigma^2}{\epsilon^2\lambda_{d-n_\perp+1}+\sigma^2}, \ldots, \frac{\sigma^2}{\epsilon^2\lambda_d+\sigma^2}),
\end{align}
with $\bmE_1^\star = \bmR \bmU_{-n_\perp}^\top$, $\bmD_1^\star = \bmU_{-n_\perp} \bm{\Gamma}_{\perp}^{\loc} \bmR^{-1}$ for an arbitrary invertible matrix $\bmR \in \bbR^{n_\perp \times n_\perp}$.
\end{proposition}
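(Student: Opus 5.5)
The proof has two parts. First, a Taylor expansion reduces the objective to its linearization. Second, we apply Proposition~\ref{prop:linear_orthogonal} to the resulting linear problem, with $\bmC_T$ in place of $\bm{\Sigma}$.

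For the expansion, write the corrupted input as $\bmx_0+\bm{h}$ with $\bm{h} \coloneqq (\bmx_\epsilon-\bmx_0)+\bm{\delta}_\sigma$. Since $\exp_{\bmx_0}$ and $\iota$ are smooth, we have $\bmx_\epsilon-\bmx_0=\epsilon\bmJ_\iota\bm{\xi}+\bm{q}_\epsilon$ with $\|\bm{q}_\epsilon\|=O(\epsilon^2\|\bm{\xi}\|^2)$. Because $\bmD\circ\bmE$ is $C^2$ and $\bmD(\bmE(\bmx_0))=\bm{0}$, the chain rule gives
\begin{align}
\bmD(\bmE(\bmx_0+\bm{h}))=\bmD_1\bmE_1\bm{h}+O(\|\bm{h}\|^2).
\end{align}
Hence the residual equals $\bm{\delta}_\sigma-\bmD_1\bmE_1(\epsilon\bmJ_\iota\bm{\xi}+\bm{\delta}_\sigma)+\bm{e}$, where $\|\bm{e}\|=O(\epsilon^2\|\bm{\xi}\|^2+\|\bm{h}\|^2)$.

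Expanding the square produces three terms: the leading term, a cross term $2\langle\cdot,\bm{e}\rangle$, and $\|\bm{e}\|^2$. The leading factor is $O(\epsilon\|\bm{\xi}\|+\|\bm{\delta}_\sigma\|)$ and $\bm{e}$ is $O(\epsilon^2+\sigma^2)$, so the cross term is of order $(\epsilon+\sigma)(\epsilon^2+\sigma^2)$, which is $o(\epsilon^2+\sigma^2)$, and $\|\bm{e}\|^2$ is smaller still. This argument needs several ingredients:
\begin{itemize}
\item enough moments of $\bm{\xi}$ (e.g.\ bounded support or finite fourth moments), plus Gaussian moments of $\bm{\delta}_\sigma$;
\item uniform control of the $C^2$ remainder, via either a localization argument or bounded second derivatives of $\bmD\circ\bmE$ and $\exp$, so that the Taylor bounds hold in expectation and not only pointwise;
\item $\bbE\|\bm{\delta}_\sigma\|^k=O(\sigma^k)$.
\end{itemize}
I expect the main obstacle to be making this expectation-level remainder rigorous, since the Gaussian noise has unbounded support. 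I would first assume globally bounded second derivatives; failing that, I would truncate at $\|\bm{\delta}_\sigma\|\le\sigma^{1/2}$ and bound the Gaussian tail separately.

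For the minimization, set $\bmx' \coloneqq \epsilon\bmJ_\iota\bm{\xi}$. It has mean zero and covariance $\epsilon^2\bmC_T=\bmU\diag(\epsilon^2\lambda_i)\bmU^\top$, and it is independent of $\bm{\delta}_\sigma$. The leading objective $\bbE\|\bm{\delta}_\sigma-\bmD_1\bmE_1(\bmx'+\bm{\delta}_\sigma)\|^2$ depends only on these second moments: expanding it gives
\begin{align}
\operatorname{tr}\bigl(\sigma^2\bmI-2\sigma^2\bmD_1\bmE_1+\bmD_1\bmE_1(\epsilon^2\bmC_T+\sigma^2\bmI)(\bmD_1\bmE_1)^\top\bigr).
\end{align}
This is exactly the linear $\bm{\delta}$-prediction objective of Proposition~\ref{prop:linear_orthogonal}, with $\bm{\Sigma}$ replaced by $\epsilon^2\bmC_T$, so Gaussianity of $\bm{z}$ is not needed.

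Applying that proposition with eigenvalues $\epsilon^2\lambda_i$ yields directly the stated $\bmU_{-n_\perp}$ and $\bm{\Gamma}_\perp^{\loc}$, and it characterizes the factorizations as $\bmE_1^\star=\bmR\bmU_{-n_\perp}^\top$, $\bmD_1^\star=\bmU_{-n_\perp}\bm{\Gamma}_\perp^{\loc}\bmR^{-1}$. The same non-tie caveat as in the linear case applies. In particular, since the smallest eigenvalues are the zeros on $N_{\bmx_0}\cM$, any choice of $n_\perp\le d-r$ gives gains equal to $1$ on the selected normal directions.
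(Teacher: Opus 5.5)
Your proposal is correct and takes the same route as the paper. You Taylor-expand $\bmx_\epsilon$ and $\bmD\circ\bmE$ around $\bmx_0$, show that the squared remainder and cross term are $o(\epsilon^2+\sigma^2)$, and then apply Proposition~\ref{prop:linear_orthogonal} with $\bm{\Sigma}$ replaced by $\epsilon^2\bmC_T$. Your moment and uniform-remainder conditions, and your remark that the linear result uses only second moments (so $\epsilon\bmJ_\iota\bm{\xi}$ need not be Gaussian), make explicit points that the paper's proof simply asserts.
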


\begin{wrapfigure}{r}{0.36\linewidth}
\centering
\vspace{-6mm}
\includegraphics[width=1.0\linewidth]{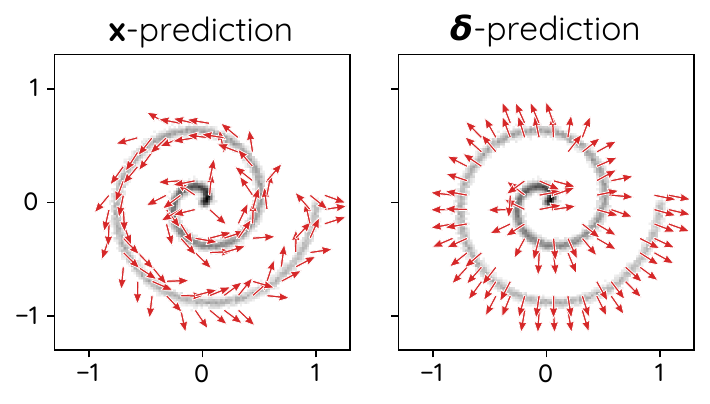}
\vspace{-8mm}
\caption{\textbf{Encoder Jacobians on a 2D spiral.}}
\vspace{-6mm}
\label{fig:toy_jacobian}
\end{wrapfigure}

For $n_\perp\leq d-r$, the vanishing eigenvalues of $\bmC_T$ correspond to normal directions; hence, the local Jacobian of the $\bm\delta$-prediction encoder lies in the normal space and spans it when $n_\perp=d-r$.
Propositions~\ref{prop:general_parallel} and~\ref{prop:general_orthogonal} thus provide complementary local sensitivities without access to the tangent or normal spaces.
\Cref{fig:toy_jacobian} verifies this on a two-dimensional spiral, where the $\bmx$- and $\bm\delta$-prediction encoders have Jacobians parallel and orthogonal to the manifold.

The linear setting is therefore not merely illustrative: after locally linearizing the manifold and encoder-decoder, the general problem reduces to the rank-constrained regression of \Cref{sec:theory_linear}.
This motivates the construction below, where $\bmx$- and $\bm\delta$-prediction encoders define the manifold-parallel and manifold-orthogonal losses.

\subsection{From Autoencoders to Generative Learning}
\label{sec:autoencoder_to_loss}

We now connect these local Jacobians to the generative objective in \eqref{eq:decomposed_loss}. Recall from \eqref{eq:feature_gradient} that the gradient of a feature loss is obtained by mapping the feature residual back to the ambient space through the transpose of the feature Jacobian; the row space of the encoder Jacobian therefore determines the directions along which the loss can update a generated sample.

\begin{proposition}[Directional decomposition of the feature loss]
\label{prop:generative_gradient_decomposition}
Let $\bmx_0 \in \cM$, $n_\perp\leq d-r$, and $\bmphi^\parallel$ and $\bmphi^\perp$ denote encoders satisfying the locally optimal characterizations of Prop.~\ref{prop:general_parallel} and~\ref{prop:general_orthogonal}. Then
\begin{align}
    \Image((\bmJ_{\bmphi^\parallel}(\bmx_0))^\top) \subseteq T_{\bmx_0}\cM, \quad \Image((\bmJ_{\bmphi^\perp}(\bmx_0))^\top) \subseteq N_{\bmx_0}\cM.
\end{align}
Thus, the gradient of the decomposed feature loss in \eqref{eq:decomposed_loss}, evaluated locally at $\bmx_0$, decomposes
\begin{align}
    \textstyle \nabla_{\hat{\bmx}} \cL = \lambda^\parallel \bmg^\parallel + \lambda^\perp \bmg^\perp, \quad \bmg^\parallel \in T_{\bmx_0}\cM, \quad
    \bmg^\perp \in N_{\bmx_0}\cM.
\end{align}
\end{proposition}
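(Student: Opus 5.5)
The plan is to identify $\bmJ_{\bmphi^a}(\bmx_0)$ with the local encoder Jacobian $\bmE_1^\star$ from the corresponding proposition, read off its row space, and then plug this into the gradient formula of \eqref{eq:feature_gradient}. Concretely, set $\bmphi^\parallel=\bmE$ for the $\bmx$-prediction encoder and $\bmphi^\perp=\bmE$ for the $\bm\delta$-prediction encoder. Then $\bmJ_{\bmphi^a}(\bmx_0)=d\bmE_{\bmx_0}=\bmE_1^\star$.

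\textbf{Parallel branch.} By Prop.~\ref{prop:general_parallel}, $\bmJ_{\bmphi^\parallel}(\bmx_0)=\bmR\bmU_{n_\parallel}^\top$ with $\bmR$ invertible. Hence $\Image(\bmJ_{\bmphi^\parallel}(\bmx_0)^\top)=\Image(\bmU_{n_\parallel}\bmR^\top)=\Span(\bmU_{n_\parallel})$. Since $n_\parallel\le r$, the columns of $\bmU_{n_\parallel}$ are eigenvectors of $\bmC_T$ with strictly positive eigenvalues $\lambda_1,\dots,\lambda_{n_\parallel}$. They therefore lie in $\Image(\bmC_T)=T_{\bmx_0}\cM$, using the stated identification under full rank of $\bm\Sigma_T$.

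\textbf{Orthogonal branch.} By Prop.~\ref{prop:general_orthogonal}, $\Image(\bmJ_{\bmphi^\perp}(\bmx_0)^\top)=\Span(\bmU_{-n_\perp})$. The eigenvalue indices $d-n_\perp+1,\dots,d$ all exceed $r$ because $n_\perp\le d-r$, so these eigenvalues vanish. The corresponding eigenvectors therefore lie in $\ker(\bmC_T)=N_{\bmx_0}\cM$.

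\textbf{Step I expect to be the main obstacle.} Strictly, ``the'' eigenvectors are only determined up to rotation within eigenspaces. For the normal branch this is harmless: any choice of $n_\perp$ vectors from the zero eigenspace stays inside $\ker(\bmC_T)$. For the parallel branch, the no-tie assumption $\lambda_{n_\parallel}>\lambda_{n_\parallel+1}$ makes $\Span(\bmU_{n_\parallel})$ well defined. Even without it, any choice of top-$n_\parallel$ eigenvectors lies in the positive-eigenvalue subspace, because $n_\parallel\le r$.

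\textbf{Gradient decomposition.} Apply \eqref{eq:feature_gradient} at $\hat\bmx=\bmx_0$ (``evaluated locally''). Define $\bmg^a\coloneqq\bmJ_{\bmphi^a}(\bmx_0)^\top(\bmphi^a(\hat\bmx)-\bmphi^a(\bmx))$. Whatever the feature residual is, $\bmg^a$ belongs to $\Image(\bmJ_{\bmphi^a}(\bmx_0)^\top)$. The two inclusions just proved then give $\bmg^\parallel\in T_{\bmx_0}\cM$ and $\bmg^\perp\in N_{\bmx_0}\cM$, and $\nabla_{\hat\bmx}\cL=\lambda^\parallel\bmg^\parallel+\lambda^\perp\bmg^\perp$ follows by linearity.

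If one wants $\hat\bmx$ merely near $\bmx_0$ rather than equal to it, I would add a continuity remark. Since $\bmE$ is $C^2$, $\bmJ_{\bmphi^a}(\hat\bmx)=\bmJ_{\bmphi^a}(\bmx_0)+O(\|\hat\bmx-\bmx_0\|)$. The decomposition then holds up to a first-order error, consistent with the leading-order nature of Props.~\ref{prop:general_parallel}--\ref{prop:general_orthogonal}.
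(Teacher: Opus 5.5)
Your proposal is correct and follows the paper's proof essentially step for step. Like the paper, you read off $\bmJ_{\bmphi^a}(\bmx_0)=\bmR\bmU^\top$ from Props.~\ref{prop:general_parallel} and~\ref{prop:general_orthogonal}, place $\Span(\bmU_{n_\parallel})$ in the positive eigenspace of $\bmC_T$ and $\Span(\bmU_{-n_\perp})$ in its zero eigenspace (using $n_\perp\leq d-r$), and substitute into \eqref{eq:feature_gradient} at $\hat{\bmx}=\bmx_0$. Your added remarks on the non-uniqueness of eigenvectors and on extending the result to $\hat{\bmx}$ near $\bmx_0$ by continuity are sound, though not needed for the statement as posed.
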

\Cref{prop:generative_gradient_decomposition} connects the representation-learning objectives to the generative loss: the $\bmx$-prediction encoder yields feature gradients confined to manifold-parallel directions and the $\bm\delta$-prediction encoder to manifold-orthogonal directions, so that their combination in \eqref{eq:decomposed_loss} updates a generated sample along complementary directions.

\begin{algorithm}[t]
\caption{\method{}: Learning Manifold-Decomposed Features and the Generative Model}
\label{alg:manifold_decomposed_training}
\begin{algorithmic}[1]
\Require Encoder $\bmphi$, projections $\bmP_{\parallel},\bmP_{\perp}$ onto its first $n_\parallel$ and last $n_\perp$ features, decoders $\bmD^{\parallel},\bmD^{\perp}$, generator $\bmG_\theta$; steps $N_{\mathrm{rep}},N_{\mathrm{gen}}$; weights $(\lambda_{\bmx},\lambda_{\bm{\delta}},\lambda_{\iso},\lambda^{\parallel},\lambda^{\perp})$
\Statex \textbf{Stage 1: Representation learning}
\For{step $n=1,\ldots,N_{\mathrm{rep}}$}
    \State Sample a clean sample $\bmx$, corruption $\bm{\delta}$, and let $\tilde{\bmx}\gets\bmx+\bm{\delta}$
    \State Compute representations
    $\bmh^{\parallel}\gets(\bmP_{\parallel}\circ\bmphi)(\tilde{\bmx})$ and
    $\bmh^{\perp}\gets(\bmP_{\perp}\circ\bmphi)(\tilde{\bmx})$
    \State Minimize $\lambda_{\bmx} \cdot \|\bmx-\bmD^{\parallel}(\bmh^{\parallel})\|_2^2
    +\lambda_{\bm{\delta}} \cdot \|\bm{\delta}-\bmD^{\perp}(\bmh^{\perp})\|_2^2$ w.r.t. $(\bmphi,\bmD^{\parallel},\bmD^{\perp})$
\EndFor
\State Freeze
$\bmphi^{\parallel}\coloneqq\bmP_{\parallel}\circ\bmphi$
and
$\bmphi^{\perp}\coloneqq\bmP_{\perp}\circ\bmphi$

\Statex \textbf{Stage 2: Generative model learning}
\For{step $n=1,\ldots,N_{\mathrm{gen}}$}
    \State Sample prior $\bm{\varepsilon}$ and generative model output $\hat{\bmx}_\theta \gets \bmG_\theta(\bm{\varepsilon})$
    \State Assign a target $\bmx_{\targ}$ to $\hat{\bmx}_\theta$ by the underlying generative method
    \State Minimize
    $\lambda_{\iso} \|\hat{\bmx}_\bmtheta - \bmx_{\targ}\|_2^2 + \lambda^{\parallel} \| \bmphi^{\parallel}(\hat{\bmx}_{\theta}) -\bmphi^{\parallel}(\bmx_{\targ})\|_2^2
    +\lambda^{\perp} \|\bmphi^{\perp}(\hat{\bmx}_{\theta}) -\bmphi^{\perp}(\bmx_{\targ})\|_2^2$ w.r.t. $\theta$
\EndFor
\end{algorithmic}
\end{algorithm}

\paragraph{Learning the encoders.}
We use a shared-encoder dual-decoder architecture. Given $\tilde{\bmx}=\bmx+\bm\delta$, the encoder $\bmphi:\bbR^d\rightarrow\bbR^n$, with $n=n_\parallel+n_\perp$, is split by coordinate projections $\bmP_\parallel$ and $\bmP_\perp$ into
\begin{align}
    \bmphi^\parallel \coloneqq \bmP_\parallel\circ\bmphi,
    \qquad
    \bmphi^\perp \coloneqq \bmP_\perp\circ\bmphi.
\end{align}
These branch encoders correspond to Propositions~\ref{prop:general_parallel} and~\ref{prop:general_orthogonal}. Under the assumptions of \Cref{sec:theory_smooth}, $n_\parallel\leq r$ restricts $\bmx$-prediction features to dominant tangent directions, while $n_\perp\leq d-r$ restricts $\bm\delta$-prediction features to the normal space, which they span when $n_\perp=d-r$. Because these dimensions are unknown, $n_\parallel$ and $n_\perp$ are treated as branch-capacity hyperparameters.

Each feature subset is provided exclusively to its corresponding decoder: the parallel decoder $\bmD^\parallel$ is trained for $\bmx$-prediction, while the orthogonal decoder $\bmD^\perp$ is trained for $\bm\delta$-prediction:
\begin{align}
    \cL_{\ae} \coloneqq \lambda_{\bmx} \cdot \bbE_{\bmx,\bm{\delta}}[\|\bmx - \bmD^\parallel(\bmphi^\parallel(\bmx + \bm{\delta})) \|_2^2] + \lambda_{\bm{\delta}} \cdot \bbE_{\bmx,\bm{\delta}}[\|\bm{\delta} - \bmD^\perp(\bmphi^\perp(\bmx + \bm{\delta})) \|_2^2]
\end{align}
The partition prevents the two decoders from sharing task-specific features while retaining a common feature extractor. By the analysis of \Cref{sec:theory_linear}, $\bmx$-prediction renders $\bmphi^\parallel$ insensitive to off-manifold corruption, whereas $\bm\delta$-prediction drives $\bmphi^\perp$ to capture deviations from the data manifold. These feature subsets enter the generative loss in \eqref{eq:decomposed_loss} directly.

\paragraph{Training the generator.}
Our method is two-stage: after training, the encoders are frozen and used to train the generator. Given $\bm{\varepsilon}$ from the prior, the generator produces $\hat{\bmx}_{\theta}\coloneqq\bmG_{\theta}(\bm{\varepsilon})$; the underlying method constructs a regression target $\bmx_{\targ}$, and we modify only the loss comparing these quantities.

For OT matching, $\bmx_{\targ}$ is assigned by minibatch OT between current outputs and real samples, following explicit Wasserstein minimization~\citep{fatras2020learning,chen2019gradual}. Given batches $\{\bm{\varepsilon}_{i}\}$ and $\{\bmx_{j}\}$, with $\hat{\bmx}_{\theta,i}=\bmG_{\theta}(\bm{\varepsilon}_{i})$, we compute
\begin{align}
    \textstyle \pi^{\star}\coloneqq\argmin_{\pi\in\Pi(B)}
    \sum_{i,j}\pi_{ij}\|\hat{\bmx}_{\theta,i}-\bmx_{j}\|_{2}^{2}.
    \label{eq:ot_coupling}
\end{align}
We then set $\bmx_{\targ}=\bmx_{j}$ for the data sample assigned to $\hat{\bmx}_{\theta,i}$, treating the assignment as fixed during the generator update. For a flow-map semigroup objective such as Riemannian MeanFlow~\citep{woo2026riemannianmeanflow}, let $\bmG_{\theta}^{s\rightarrow t}(\bmx_{s})$ denote the model prediction obtained by transporting $\bmx_{s}$ directly from time $s$ to $t$. The regression target is constructed by composing predictions over two shorter intervals:
\begin{align}
    \hat{\bmx}_{\theta} = \bmG_{\theta}^{s\rightarrow t}(\bmx_{s}), \quad
    \bmx_{\targ} = \sg\left[
        \bmG_{\theta}^{r\rightarrow t}
        \left(
            \bmG_{\theta}^{s\rightarrow r}(\bmx_{s})
        \right)
    \right].
    \label{eq:meanflow_target}
\end{align}
Here, $s\leq r\leq t$, and $\sg$ denotes stop-gradient. Additional terms required by the underlying method, such as the MeanFlow boundary loss, are retained.
Given $\bmx_{\mathrm{targ}}$, we train the generator using
\begin{align}
    \textstyle
    \cL_{\mathrm{gen}}(\theta) \coloneqq
    \bbE[
    \lambda_{\iso} \|\hat{\bmx}_\bmtheta - \bmx_{\targ}\|_2^2 +
    \lambda^{\parallel} \cL_{\bmphi^{\parallel}}(\hat{\bmx}_{\theta},\bmx_{\mathrm{targ}}) +
        \lambda^{\perp} \cL_{\bmphi^{\perp}}(\hat{\bmx}_{\theta}, \bmx_{\targ})].
    \label{eq:gen_loss}
\end{align}
Thus, the proposed metric is independent of how the regression target is constructed.
The squared discrepancy in either ambient or feature space may be replaced by any differentiable isotropic loss; the gradient-decomposition analysis extends directly, up to a loss-dependent scalar weighting of each residual.
\Cref{alg:manifold_decomposed_training} summarizes the complete two-stage procedure.

\begin{figure}[t]
\centering
\includegraphics[width=1.0\linewidth]{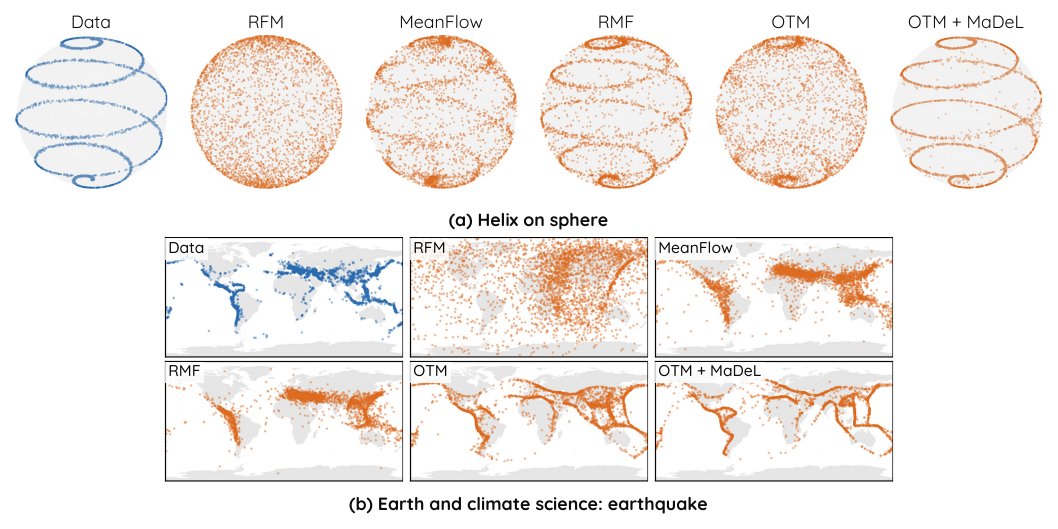}
\vspace{-8mm}
\caption{\textbf{Single-step generation on manifold-valued benchmarks.}}
\label{fig:qualitative_results}
\end{figure}

\vspace{-2mm}
\section{Experiments} \label{sec:experiments}

We evaluate \method{} on four benchmark families spanning synthetic and structured real-world distributions. Full experimental details are provided in \Cref{app:details}.

\textbf{Benchmarks.}
\emph{Helix on sphere}: a helix in $\mathbb{S}^{2}$, embedded isometrically into $\mathbb{S}^{512}\subset\mathbb{R}^{513}$ by a fixed orthonormal map and perturbed by tangential Gaussian noise.
\emph{Earth and climate science}: geographic coordinates of volcanic eruptions, earthquakes, floods, and wildfires represented on $\mathbb{S}^{2}$, following prior benchmarks for manifold-valued generation~\citep{debortoli2022riemannian,chen2024flow}.
\emph{Protein and RNA torsion angles}: the preprocessed benchmarks of Riemannian Flow Matching~\citep{chen2024flow}, following \citet{huang2022riemannian}, comprising four Ramachandran-angle distributions (General, Glycine, Proline, and Pre-Pro) on $\mathbb{T}^{2}$ and a seven-dimensional RNA torsion distribution on $\mathbb{T}^{7}$.
\emph{Protein backbones}: SCOPe protein structures~\citep{chandonia2022scope}, following the preprocessing and evaluation protocol of FrameFlow~\citep{yim2023fast}.

\begin{table}[t]
\centering
\resizebox{1.0\columnwidth}{!}{
\begin{tabular}{lccccccccccc}
\toprule
\multirow{2}{*}{\textbf{Method}} & \multicolumn{5}{c}{\textbf{Earth and Climate Science Datasets} ($W_1\downarrow$)} & \multicolumn{6}{c}{\textbf{Protein and RNA Torsion-Angle Datasets} ($W_1\downarrow$)} \\
\cmidrule(lr){2-6} \cmidrule(lr){7-12}
 & Earthquake & Fire & Flood & Volcano & Avg. & General & Glycine & Proline & Pre-Pro & RNA & Avg. \\
\midrule
RFM    & $27.19^\circ$ & $33.04^\circ$ & $23.68^\circ$         & $28.40^\circ$ & $28.08^\circ$
& $57.79^\circ$ & $64.80^\circ$ & $60.74^\circ$ & $32.68^\circ$ & $63.55^\circ$ & $55.912^\circ$ \\
MF     & $6.43^\circ$  & $9.22^\circ$  & $6.67^\circ$          & $14.95^\circ$ & $9.32^\circ$
& $17.27^\circ$ & $11.57^\circ$ & $16.25^\circ$ & $13.77^\circ$ & $35.62^\circ$ & $18.896^\circ$ \\
RMF    & $9.91^\circ$  & $10.21^\circ$ & $8.39^\circ$          & $16.82^\circ$ & $11.33^\circ$
& $34.33^\circ$ & $24.63^\circ$ & $31.84^\circ$ & $15.21^\circ$ & $41.84^\circ$ & $29.570^\circ$ \\
OTM    & $6.22^\circ$  & $6.25^\circ$  & $\mathbf{4.88}^\circ$ & $6.22^\circ$  & $5.89^\circ$
& $4.81^\circ$ & $8.27^\circ$ & $4.97^\circ$ & $\mathbf{5.97}^\circ$ & $23.68^\circ$ & $9.540^\circ$ \\
\, + \method{} & $\mathbf{5.81}^\circ$ & $\mathbf{5.93}^\circ$ & $5.16^\circ$ & $\mathbf{5.46}^\circ$ & $\mathbf{5.59}^\circ$
& $\mathbf{4.51}^\circ$ & $\mathbf{6.69}^\circ$ & $\mathbf{4.39}^\circ$ & $6.03^\circ$ & $\mathbf{22.52}^\circ$ & $\mathbf{8.828}^\circ$ \\
\bottomrule 
\end{tabular}}
\caption{\textbf{Single-step generation error on Earth and torsion-angle datasets.} Angular $W_1$ distance (degrees) between generated and ground-truth distributions; lower is better. RFM: Riemannian Flow Matching; MF, RMF: (Riemannian) MeanFlow; OTM: coordinate-space OT matching; + \method{}: OTM with our loss. All methods use single-step sampling.}
\label{tab:error}
\vspace{-2mm}
\end{table}

\textbf{Baselines.}
We compare with Riemannian Flow Matching (RFM)~\citep{chen2024flow}, MeanFlow (MF)~\citep{geng2025meanflow}, and Riemannian MeanFlow (RMF)~\citep{woo2026riemannianmeanflow}, each trained with its native objective. We also compare with coordinate-space OT matching (OTM), which computes a minibatch OT coupling between generated samples and real samples and regresses each generated output toward its assigned real-data target. OTM and OTM + \method{} use the same coupling, generator architecture, and sampling procedure, differing only in the regression loss.

\vspace{-2mm}
\subsection{Main Results} \label{sec:results}
\vspace{-1mm}

\textbf{Helix on sphere.}
In \Cref{fig:qualitative_results} top, RFM, MF, RMF, and OTM distribute samples broadly over the sphere, failing to concentrate on the helical support. The Riemannian methods preserve the geometry of the spherical state space but do not recover the lower-dimensional data manifold embedded within it. In contrast, OTM + \method{} reconstructs the global helical structure and places more probability mass near the data support using the same single network evaluation.

\textbf{Earth and climate science.}
As shown in \Cref{fig:qualitative_results} bottom, RFM spreads samples broadly across the surface, while MF and RMF recover the major high-density regions but blur their finer structure. 
OTM captures much of the global support but retains noticeable dispersion away from the narrow data structures.
Using the same OT assignments, augmenting the coordinate-space regression loss with \method{} produces more sharply localized samples while preserving broad geographic coverage.
In \Cref{tab:error} left, OTM + \method{} achieves the lowest angular distance on three of the four datasets and improves the Earth average from $5.89^\circ$ to $5.59^\circ$; OTM remains strongest on Flood.

\textbf{Protein and RNA torsion angles.}
In \Cref{tab:error} right, OTM + \method{} achieves the lowest angular $W_1$ distance on four of the five datasets and improves the mean distance from $9.540^\circ$ to $8.828^\circ$ over OTM. It remains essentially tied on Pre-Pro ($6.03^\circ$ versus $5.97^\circ$), with the largest improvement on Glycine, the broadest Ramachandran distribution.

\paragraph{Protein backbone generation.}
We train identical RMF models, with \method{} adding a frozen decomposed-feature term to the standard semigroup objective.
Although \method{} does not improve designability, it improves local steric geometry. We measure the maximum van der Waals overlap between backbone $\mathrm{N}$, $\mathrm{C}_{\alpha}$, and $\mathrm{C}$ atoms of residues separated by at least two positions; lower values indicate fewer steric clashes, and real SCOPe backbones score $0.28$--$0.30$~\AA. As shown in \Cref{fig:scope}, across eight paired prior batches from the one-million-step checkpoints, \method{} reduces the median maximum overlap at all ten tested lengths and all three sampling budgets. At $L=128$, it reduces overlap from $1.375$ to $1.011$~\AA{} at $\mathrm{NFE}=10$ (lower in $8/8$ batches), from $1.742$ to $1.503$~\AA{} at $\mathrm{NFE}=5$ ($7/8$), and from $2.732$ to $2.681$~\AA{} at $\mathrm{NFE}=1$ ($6/8$). This is not explained by chain expansion: at $L=60$, \method{} produces backbones that are both more compact and less clashing in $6$--$7$ of eight batches. Designability remains essentially unchanged ($10/640$ versus $12/640$), as does median self-consistency RMSD ($8.63$ versus $8.66$~\AA{} at $L=128$ and $\mathrm{NFE}=10$).


\begin{figure}[t]
\centering
\includegraphics[width=1.0\linewidth]{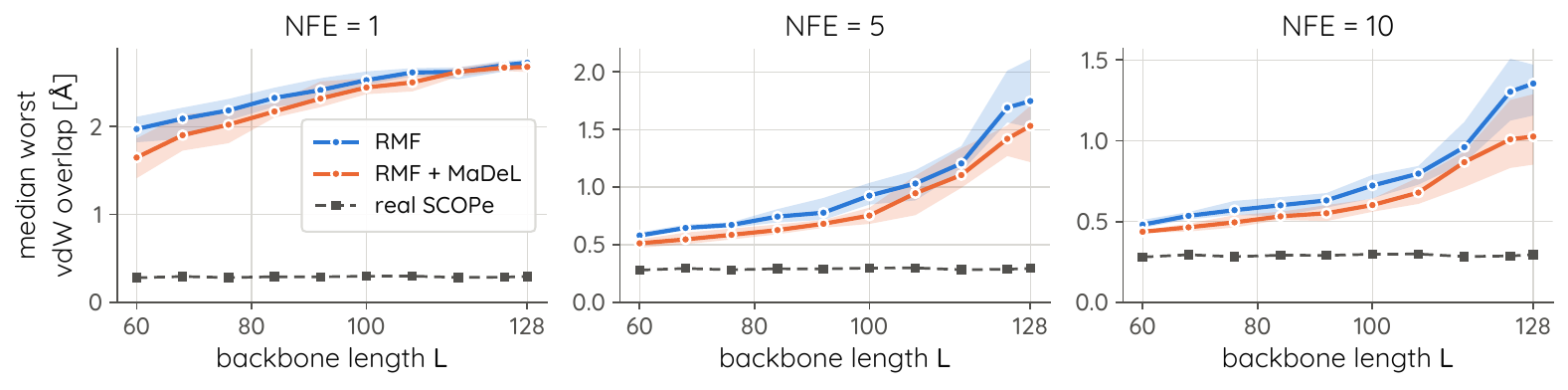}
\vspace{-4mm}
\caption{\textbf{Steric violation versus backbone length.} Deepest backbone van der Waals overlap per structure (N/C$\alpha$/C, residues $\geq 2$ apart), median over 8 independent batches of 128 paired samples from each model's 1M-step checkpoint; bands are the full batch range, dashed is real SCOPe. \method{} is lower at every length and budget.}
\vspace{-4mm}
\label{fig:scope}
\end{figure}

\subsection{Ablation Study and Analysis} \label{sec:ablation}

\begin{wraptable}{r}{0.5\linewidth}
\begin{minipage}{0.5\textwidth}
\vspace{-4mm}
\centering
\resizebox{1.0\textwidth}{!}{
\begin{tabular}{lccccc}
\toprule
\textbf{Loss} & Earthquake & Fire & Flood & Volcano & Avg. \\
\midrule
OTM & $6.22^\circ$ & $6.25^\circ$ & $\mathbf{4.88}^\circ$ & $6.22^\circ$ & $5.89^\circ$ \\
+ Overparam. AE & $5.89^\circ$ & $6.16^\circ$ & $5.08^\circ$ & $6.24^\circ$ & $5.84^\circ$ \\
+ $\parallel$ only ($\lambda^\perp=0$) & $6.12^\circ$ & $6.94^\circ$ & $5.41^\circ$ & $6.04^\circ$ & $6.13^\circ$ \\
+ $\perp$ only ($\lambda^\parallel=0$) & $6.59^\circ$ & $6.42^\circ$ & $5.17^\circ$ & $6.15^\circ$ & $6.08^\circ$ \\
+ \method{} & $\mathbf{5.81}^\circ$ & $\mathbf{5.93}^\circ$ & $5.16^\circ$ & $\mathbf{5.46}^\circ$ & $\mathbf{5.59}^\circ$ \\
\bottomrule
\end{tabular}}
\caption{\textbf{Ablation of the decomposed loss.}
Angular $W_1$ under the same OT pairing.}
\label{tab:ablation}
\vspace{-2mm}
\end{minipage}
\end{wraptable}

\paragraph{Ablation.}
We ablate the two feature branches under the same OT pairing (\Cref{tab:ablation}). While Flood is an exception, with all feature-based variants underperforming coordinate-space OTM, the average results show a clear trend: the overparameterized autoencoder yields only a modest improvement from $5.89^\circ$ to $5.84^\circ$, either manifold-decomposed branch alone performs worse ($6.13^\circ$ for $\parallel$ and $6.08^\circ$ for $\perp$), and combining both branches achieves the best average of $5.59^\circ$. \method{} also attains the lowest error on Earthquake, Fire, and Volcano, indicating that the complementary branches provide the strongest overall performance despite variation across datasets.

\begin{wraptable}{r}{0.33\linewidth}
\begin{minipage}{0.33\textwidth}
\vspace{-4mm}
\centering
\resizebox{1.0\textwidth}{!}{
\begin{tabular}{lcccc}
\toprule
& $\sigma = 0.0$ & $0.01$ & $0.05$ & $0.1$ \\
\midrule
$\bmphi^\parallel$ & $15.88$ & $16.75$ & $45.04$ & $33.21$ \\
$\bmphi^\perp$ & $0.02$ & $0.02$ & $0.04$ & $0.03$ \\
\bottomrule
\end{tabular}}
\caption{\textbf{Jacobian tangent-to-normal energy ratio.}}
\label{tab:energy}
\vspace{-4mm}
\end{minipage}
\end{wraptable}

\textbf{Analysis.}
To test whether the encoders realize the decomposition predicted by \Cref{sec:theory_smooth}, we use the tangent and normal spaces of the helix and measure the ratio of manifold-parallel to manifold-orthogonal Jacobian energy (\Cref{tab:energy}).
The ratio is evaluated at samples perturbed by noise with standard deviation $\sigma$. Across noise levels, $\bmphi^\parallel$ produces large ratios ($15.88$--$45.04$), whereas $\bmphi^\perp$ produces ratios near zero ($0.02$--$0.04$), confirming that energies concentrate in complementary directions on and near the manifold.

\vspace{-2mm}
\section{Conclusion} \label{sec:discussion}
\vspace{-2mm}

We introduced \method{}, a manifold-decomposed feature loss for generative modeling that distinguishes variation along the data manifold from deviation away from it.
We showed that, under a feature bottleneck, $\bmx$-prediction and $\bm\delta$-prediction learn encoders whose Jacobians are confined to the tangent and normal spaces of the manifold, exactly for linear manifolds and locally for smooth manifolds, so that the resulting feature loss yields complementary tangent- and normal-aligned generative updates without explicit access to the manifold.
Across controlled manifolds and structured real-world domains, \method{} better preserves data support and improves distributional or local geometric accuracy relative to isotropic regression, which suggests that explicitly shaping the geometry of the training signal is a practical alternative to parameterizing the data manifold itself, particularly in sparse and highly constrained generative problems.

\paragraph{Limitations.}
Our analysis is local and holds to leading order in the corruption level, while the branch dimensions and loss weights must be chosen without knowledge of the intrinsic dimension. Our experiments cover a limited set of synthetic and structured scientific benchmarks, and do not yet establish scalability across models, datasets, or modalities. Future work will develop \method{} for scalable manifold generative modeling; \Cref{app:limitations} discusses these limitations.


\bibliography{iclr2027_conference}

@InProceedings{johnson2016perceptual,
author={Justin Johnson and Alexandre Alahi and Fei-Fei Li},
title={{Perceptual Losses for Real-Time Style Transfer and Super-Resolution}},
booktitle={ECCV},
year={2016}
}

@InProceedings{zhang2018unreasonable,
author={Richard Zhang and Phillip Isola and Alexei A. Efros and Eli Shechtman and Oliver Wang},
title={{The Unreasonable Effectiveness of Deep Features as a Perceptual Metric}},
booktitle={CVPR},
year={2018}
}

@InProceedings{sauer2021projected,
author={Axel Sauer and Kashyap Chitta and Jens M{\"u}ller and Andreas Geiger},
title={{Projected GANs Converge Faster}},
booktitle={NeurIPS},
year={2021}
}

@InProceedings{song2023consistency,
author={Yang Song and Prafulla Dhariwal and Mark Chen and Ilya Sutskever},
title={{Consistency Models}},
booktitle={ICML},
year={2023}
}

@InProceedings{huang2022riemannian,
author={Chin-Wei Huang and Milad Aghajohari and Avishek Joey Bose and Prakash Panangaden and Aaron Courville},
title={{Riemannian Diffusion Models}},
booktitle={NeurIPS},
year={2022}
}

@InProceedings{chen2024flow,
author={Ricky T. Q. Chen and Yaron Lipman},
title={{Flow Matching on General Geometries}},
booktitle={ICLR},
year={2024}
}

@InProceedings{stanczuk2024diffusion,
author={Jan Pawel Stanczuk and Georgios Batzolis and Teo Deveney and Carola-Bibiane Sch{\"o}nlieb},
title={{Diffusion Models Encode the Intrinsic Dimension of Data Manifolds}},
booktitle={ICML},
year={2024}
}

@InProceedings{sun2023sddm,
author={Shikun Sun and Longhui Wei and Junliang Xing and Jia Jia and Qi Tian},
title={{SDDM: Score-Decomposed Diffusion Models on Manifolds for Unpaired Image-to-Image Translation}},
booktitle={ICML},
year={2023}
}

@InProceedings{ho2020ddpm,
author={Jonathan Ho and Ajay Jain and Pieter Abbeel},
title={{Denoising Diffusion Probabilistic Models}},
booktitle={NeurIPS},
year={2020}
}

@InProceedings{lipman2023flow,
author={Yaron Lipman and Ricky T. Q. Chen and Heli Ben-Hamu and Maximilian Nickel and Matthew Le},
title={{Flow Matching for Generative Modeling}},
booktitle={ICLR},
year={2023}
}

@article{fefferman2016manifold,
author={Charles Fefferman and Sanjoy Mitter and Hariharan Narayanan},
title={{Testing the Manifold Hypothesis}},
journal={Journal of the American Mathematical Society},
volume={29},
number={4},
pages={983--1049},
year={2016}
}

@InProceedings{yim2023se3,
author={Jason Yim and Brian L. Trippe and Valentin De Bortoli and Emile Mathieu and Arnaud Doucet and Regina Barzilay and Tommi Jaakkola},
title={{SE(3) Diffusion Model with Application to Protein Backbone Generation}},
booktitle={ICML},
year={2023}
}

@InProceedings{woo2026riemannianmeanflow,
author={Dongyeop Woo and Marta Skreta and Seonghyun Park and Kirill Neklyudov and Sungsoo Ahn},
title={{Riemannian MeanFlow}},
booktitle={ICML},
year={2026}
}

@InProceedings{song2021score,
author={Yang Song and Jascha Sohl-Dickstein and Diederik P. Kingma and Abhishek Kumar and Stefano Ermon and Ben Poole},
title={{Score-Based Generative Modeling through Stochastic Differential Equations}},
booktitle={ICLR},
year={2021}
}

@InProceedings{liu2023rectified,
author={Xingchao Liu and Chengyue Gong and Qiang Liu},
title={{Flow Straight and Fast: Learning to Generate and Transfer Data with Rectified Flow}},
booktitle={ICLR},
year={2023}
}

@InProceedings{karras2022edm,
author={Tero Karras and Miika Aittala and Timo Aila and Samuli Laine},
title={{Elucidating the Design Space of Diffusion-Based Generative Models}},
booktitle={NeurIPS},
year={2022}
}

@InProceedings{ho2022video,
author={Jonathan Ho and Tim Salimans and Alexey Gritsenko and William Chan and Mohammad Norouzi and David J. Fleet},
title={{Video Diffusion Models}},
booktitle={NeurIPS},
year={2022}
}

@InProceedings{hoogeboom2022equivariant,
author={Emiel Hoogeboom and V{\'i}ctor Garcia Satorras and Cl{\'e}ment Vignac and Max Welling},
title={{Equivariant Diffusion for Molecule Generation in 3D}},
booktitle={ICML},
year={2022}
}

@Article{tenenbaum2000isomap,
author={Joshua B. Tenenbaum and Vin de Silva and John C. Langford},
title={{A Global Geometric Framework for Nonlinear Dimensionality Reduction}},
journal={Science},
year={2000}
}

@InProceedings{narayanan2010manifold,
author={Hariharan Narayanan and Sanjoy Mitter},
title={{Sample Complexity of Testing the Manifold Hypothesis}},
booktitle={NeurIPS},
year={2010}
}

@InProceedings{pope2021intrinsic,
author={Phillip Pope and Chen Zhu and Ahmed Abdelkader and Micah Goldblum and Tom Goldstein},
title={{The Intrinsic Dimension of Images and Its Impact on Learning}},
booktitle={ICLR},
year={2021}
}

@InProceedings{rifai2011manifold,
author={Salah Rifai and Yann N. Dauphin and Pascal Vincent and Yoshua Bengio and Xavier Muller},
title={{The Manifold Tangent Classifier}},
booktitle={NeurIPS},
year={2011}
}

@Article{alain2014regularized,
author={Guillaume Alain and Yoshua Bengio},
title={{What Regularized Auto-Encoders Learn from the Data-Generating Distribution}},
journal={JMLR},
year={2014}
}

@InProceedings{debortoli2022riemannian,
author={Valentin De Bortoli and Emile Mathieu and Michael Hutchinson and James Thornton and Yee Whye Teh and Arnaud Doucet},
title={{Riemannian Score-Based Generative Modelling}},
booktitle={NeurIPS},
year={2022}
}

@Article{chandonia2022scope,
author={John-Marc Chandonia and Lindsey Guan and Shiangyi Lin and Changhua Yu and Naomi K. Fox and Steven E. Brenner},
title={{SCOPe: Improvements to the Structural Classification of Proteins---Extended Database to Facilitate Variant Interpretation and Machine Learning}},
journal={Nucleic Acids Research},
volume={50},
number={D1},
pages={D553--D559},
year={2022}
}

@Article{yim2023fast,
author={Jason Yim and Andrew Campbell and Andrew Y. K. Foong and Michael Gastegger and Jos{\'e} Jim{\'e}nez-Luna and Sarah Lewis and Victor Garcia Satorras and Bastiaan S. Veeling and Regina Barzilay and Tommi Jaakkola and Frank No{\'e}},
title={{Fast Protein Backbone Generation with SE(3) Flow Matching}},
journal={arXiv preprint arXiv:2310.05297},
year={2023}
}

@InProceedings{kim2026ayt,
author={Beomsu Kim and ByungHee Cha and Jong Chul Ye},
title={{Align Your Trajectory Tangent: Training Better Consistency Models via Manifold-Aligned Tangents}},
booktitle={ICML},
year={2026}
}

@InProceedings{song2024icm,
author={Yang Song and Prafulla Dhariwal},
title={{Improved Techniques for Training Consistency Models}},
booktitle={ICLR},
year={2024}
}

@InProceedings{he2024mpgd,
author={Yutong He and Naoki Murata and Chieh-Hsin Lai and Yuhta Takida and Toshimitsu Uesaka and Dongjun Kim and Wei-Hsiang Liao and Yuki Mitsufuji and J. Zico Kolter and Ruslan Salakhutdinov and Stefano Ermon},
title={{Manifold Preserving Guided Diffusion}},
booktitle={ICLR},
year={2024}
}

@InProceedings{geng2025meanflow,
author={Zhengyang Geng and Mingyang Deng and Xingjian Bai and J. Zico Kolter and Kaiming He},
title={{Mean Flows for One-Step Generative Modeling}},
booktitle={NeurIPS},
year={2025}
}

@InProceedings{vincent2008extracting,
author={Pascal Vincent and Hugo Larochelle and Yoshua Bengio and Pierre-Antoine Manzagol},
title={{Extracting and Composing Robust Features with Denoising Autoencoders}},
booktitle={ICML},
year={2008}
}

@InProceedings{rifai2011contractive,
author={Salah Rifai and Pascal Vincent and Xavier Muller and Xavier Glorot and Yoshua Bengio},
title={{Contractive Auto-Encoders: Explicit Invariance During Feature Extraction}},
booktitle={ICML},
year={2011}
}

@InProceedings{liu2025tango,
author={Zichen Liu and Wei Zhang and Tiejun Li},
title={{Improving the Euclidean Diffusion Generation of Manifold Data by Mitigating Score Function Singularity}},
booktitle={NeurIPS},
year={2025}
}

@InProceedings{saito2025tangential,
author={Shinnosuke Saito and Takashi Matsubara},
title={{Be Tangential to Manifold: Discovering Riemannian Metric for Diffusion Models}},
booktitle={ECCV},
year={2026}
}

@Article{dauparas2022robust,
author={Justas Dauparas and Ivan Anishchenko and Nathaniel Bennett and Hua Bai and Robert J. Ragotte and Lukas F. Milles and Basile I. M. Wicky and Alexis Courbet and Rob J. de Haas and Neville Bethel and others},
title={{Robust Deep Learning-Based Protein Sequence Design Using ProteinMPNN}},
journal={Science},
volume={378},
number={6615},
pages={49--56},
year={2022}
}

@Article{lin2023evolutionary,
author={Zeming Lin and Halil Akin and Roshan Rao and Brian Hie and Zhongkai Zhu and Wenting Lu and Nikita Smetanin and Robert Verkuil and Ori Kabeli and Yaniv Shmueli and Allan dos Santos Costa and Maryam Fazel-Zarandi and Tom Sercu and Salvatore Candido and Alexander Rives},
title={{Evolutionary-Scale Prediction of Atomic-Level Protein Structure with a Language Model}},
journal={Science},
volume={379},
number={6637},
pages={1123--1130},
year={2023}
}

@Article{bondi1964van,
author={Abraham Bondi},
title={{Van der Waals Volumes and Radii}},
journal={The Journal of Physical Chemistry},
volume={68},
number={3},
pages={441--451},
year={1964}
}

@InProceedings{chen2019gradual,
author={Yucheng Chen and Matus Telgarsky and Chao Zhang and Bolton Bailey and Daniel Hsu and Jian Peng},
title={{A Gradual, Semi-Discrete Approach to Generative Network Training via Explicit Wasserstein Minimization}},
booktitle={ICML},
year={2019}
}

@InProceedings{fatras2020learning,
author={Kilian Fatras and Younes Zine and Rémi Flamary and Rémi Gribonval and Nicolas Courty},
title={{Learning with Minibatch Wasserstein: Asymptotic and Gradient Properties}},
booktitle={AISTATS},
year={2020}
}
\bibliographystyle{iclr2027_conference}

\appendix

\section{Proofs}

\subsection{Linear manifolds}

\begin{proof}[Proof of Proposition~\ref{prop:linear_parallel}]
Let $\bmy\coloneqq\bmx+\bm{\delta}$ and $\bmC\coloneqq\bbE[\bmy\bmy^\top]=\bm{\Sigma}+\sigma^2\bmI$. Since $\sigma>0$, $\bmC$ is positive definite. The objective depends on the encoder and decoder only through $\bmF\coloneqq\bmD\bmE$, where $\rank(\bmF)\leq n_\parallel$. Using $\bbE[\bmx\bmy^\top]=\bm{\Sigma}$ gives
\begin{align}
\bbE\|\bmx-\bmF\bmy\|_2^2
&=\operatorname{tr}(\bm{\Sigma})-2\operatorname{tr}(\bmF\bm{\Sigma})
  +\operatorname{tr}(\bmF\bmC\bmF^\top) \\
&=\|\bmF\bmC^{1/2}-\bm{\Sigma}\bmC^{-1/2}\|_F^2
  +\operatorname{tr}(\bm{\Sigma})-\operatorname{tr}(\bm{\Sigma}\bmC^{-1}\bm{\Sigma}).
\end{align}
Hence, after defining $\bmG\coloneqq\bmF\bmC^{1/2}$, minimizing the objective is equivalent to
\begin{align}
\min_{\rank(\bmG)\leq n_\parallel}
\|\bmG-\bm{\Sigma}\bmC^{-1/2}\|_F^2.
\end{align}
Because $\bmC=\bmU\diag(\lambda_i+\sigma^2)\bmU^\top$ commutes with $\bm{\Sigma}$,
\begin{align}
\bm{\Sigma}\bmC^{-1/2}
=\bmU\diag\!\left(
\frac{\lambda_1}{\sqrt{\lambda_1+\sigma^2}},\ldots,
\frac{\lambda_d}{\sqrt{\lambda_d+\sigma^2}}
\right)\bmU^\top.
\end{align}
The scalar map $\lambda\mapsto\lambda/\sqrt{\lambda+\sigma^2}$ is strictly increasing on $[0,\infty)$. The Eckart--Young--Mirsky theorem and the assumed eigengap therefore show that the unique optimal rank-$n_\parallel$ approximation retains the first $n_\parallel$ eigendirections:
\begin{align}
\bmG^\star
=\bmU_{n_\parallel}
\diag\!\left(
\frac{\lambda_1}{\sqrt{\lambda_1+\sigma^2}},\ldots,
\frac{\lambda_{n_\parallel}}{\sqrt{\lambda_{n_\parallel}+\sigma^2}}
\right)
\bmU_{n_\parallel}^\top.
\end{align}
Multiplying by $\bmC^{-1/2}$ on the right yields
\begin{align}
\bmF^\star
=\bmU_{n_\parallel}\bm{\Gamma}_\parallel
 \bmU_{n_\parallel}^\top.
\end{align}
Since $n_\parallel\rev{\leq} r$, $\bm{\Gamma}_\parallel$ is invertible and $\rank(\bmF^\star)=n_\parallel$. Thus, any factorization $\bmF^\star=\bmD^\star\bmE^\star$ through an $n_\parallel$-dimensional bottleneck has $\rank(\bmD^\star)=\rank(\bmE^\star)=n_\parallel$ and $\Image(\bmD^\star)=\Span(\bmU_{n_\parallel})$. Write $\bmD^\star=\bmU_{n_\parallel}\bmA$ for an invertible $\bmA$. Then $\bmE^\star=\bmA^{-1}\bm{\Gamma}_\parallel\bmU_{n_\parallel}^\top$. Setting $\bmR\coloneqq\bmA^{-1}\bm{\Gamma}_\parallel$ gives
\begin{align}
\bmE^\star=\bmR\bmU_{n_\parallel}^\top,
\qquad
\bmD^\star=\bmU_{n_\parallel}\bm{\Gamma}_\parallel\bmR^{-1},
\end{align}
which characterizes every optimal factorization.
\end{proof}

\begin{proof}[Proof of Proposition~\ref{prop:linear_orthogonal}]
Again let $\bmy=\bmx+\bm{\delta}$, $\bmC=\bm{\Sigma}+\sigma^2\bmI$, and $\bmF=\bmD\bmE$. Since $\bbE[\bm{\delta}\bmy^\top]=\sigma^2\bmI$,
\begin{align}
\bbE\|\bm{\delta}-\bmF\bmy\|_2^2
&=d\sigma^2-2\sigma^2\operatorname{tr}(\bmF)
  +\operatorname{tr}(\bmF\bmC\bmF^\top) \\
&=\|\bmF\bmC^{1/2}-\sigma^2\bmC^{-1/2}\|_F^2
  +d\sigma^2-\sigma^4\operatorname{tr}(\bmC^{-1}).
\end{align}
With $\bmG\coloneqq\bmF\bmC^{1/2}$, it remains to find the best rank-$n_\perp$ approximation of
\begin{align}
\sigma^2\bmC^{-1/2}
=\bmU\diag\!\left(
\frac{\sigma^2}{\sqrt{\lambda_1+\sigma^2}},\ldots,
\frac{\sigma^2}{\sqrt{\lambda_d+\sigma^2}}
\right)\bmU^\top.
\end{align}
These singular values decrease as $\lambda_i$ increases. Thus, the Eckart--Young--Mirsky theorem selects the $n_\perp$ smallest-variance eigendirections, giving
\begin{align}
\bmG^\star
=\bmU_{-n_\perp}
\diag\!\left(
\frac{\sigma^2}{\sqrt{\lambda_{d-n_\perp+1}+\sigma^2}},\ldots,
\frac{\sigma^2}{\sqrt{\lambda_d+\sigma^2}}
\right)
\bmU_{-n_\perp}^\top.
\end{align}
Therefore,
\begin{align}
\bmF^\star=\bmG^\star\bmC^{-1/2}
=\bmU_{-n_\perp}\bm{\Gamma}_\perp\bmU_{-n_\perp}^\top.
\end{align}
The diagonal entries of $\bm{\Gamma}_\perp$ are strictly positive, so $\rank(\bmF^\star)=n_\perp$. The same factorization argument as in Proposition~\ref{prop:linear_parallel} then yields
\begin{align}
\bmE^\star=\bmR\bmU_{-n_\perp}^\top,
\qquad
\bmD^\star=\bmU_{-n_\perp}\bm{\Gamma}_\perp\bmR^{-1}
\end{align}
for an arbitrary invertible $\bmR$. If eigenvalues are tied at the truncation boundary, $\bmU_{-n_\perp}$ may be any orthonormal basis of an optimal subspace within the tied eigenspace.
\end{proof}

\subsection{General smooth manifolds}

\begin{proof}[Proof of Proposition~\ref{prop:general_parallel}]
Define the local autoencoder map $\bmA_{\ae}\coloneqq\bmD\circ\bmE$. The manifold parameterization has the first-order expansion
\begin{align}
\bmx_\epsilon
=\bmx_0+\epsilon\bmJ_\iota\bm{\xi}+\bmr_\epsilon,
\qquad
\bmr_\epsilon=o(\epsilon)
\end{align}
in mean square. Since $\bmA_{\ae}$ is twice continuously differentiable and $\bmA_{\ae}(\bmx_0)=\bmx_0$, the chain rule gives
\begin{align}
\bmA_{\ae}(\bmx_\epsilon+\bm{\delta}_\sigma)
=\bmx_0+\bmD_1\bmE_1
 (\epsilon\bmJ_\iota\bm{\xi}+\bm{\delta}_\sigma)
+\bmr_{\ae},
\end{align}
where the combined Taylor remainder satisfies
$\bbE\|\bmr_\epsilon-\bmr_{\ae}\|_2^2=o(\epsilon^2+\sigma^2)$ and its cross term with the first-order residual is also $o(\epsilon^2+\sigma^2)$. Consequently,
\begin{align}
&\bbE_{\bm{\xi},\bm{\delta}_\sigma}
\|\bmx_\epsilon-\bmD(\bmE(\bmx_\epsilon+\bm{\delta}_\sigma))\|_2^2 \\
&\quad=
\bbE_{\bm{\xi},\bm{\delta}_\sigma}
\|\epsilon\bmJ_\iota\bm{\xi}
-\bmD_1\bmE_1(\epsilon\bmJ_\iota\bm{\xi}+\bm{\delta}_\sigma)\|_2^2
+o(\epsilon^2+\sigma^2).
\end{align}
The leading-order problem is precisely the reduced-rank denoising problem of Proposition~\ref{prop:linear_parallel}, with clean covariance $\epsilon^2\bmC_T$ and corruption covariance $\sigma^2\bmI$. Its eigendirections are those of $\bmC_T$, and the corresponding gains are
\begin{align}
\frac{\epsilon^2\lambda_i}{\epsilon^2\lambda_i+\sigma^2}.
\end{align}
Applying Proposition~\ref{prop:linear_parallel} therefore gives the stated $\bmD_1^\star\bmE_1^\star$, $\bmE_1^\star$, and $\bmD_1^\star$.
\end{proof}

\begin{proof}[Proof of Proposition~\ref{prop:general_orthogonal}]
Use the same expansion
$\bmx_\epsilon=\bmx_0+\epsilon\bmJ_\iota\bm{\xi}+\bmr_\epsilon$.
Now $\bmA_{\ae}(\bmx_0)=\bmD(\bmE(\bmx_0))=\bm{0}$, so Taylor expansion and the chain rule yield
\begin{align}
\bmD(\bmE(\bmx_\epsilon+\bm{\delta}_\sigma))
=\bmD_1\bmE_1
(\epsilon\bmJ_\iota\bm{\xi}+\bm{\delta}_\sigma)
+\bmr_{\ae}.
\end{align}
As in the preceding proof, the squared remainder and its cross term with the first-order residual are $o(\epsilon^2+\sigma^2)$. Hence,
\begin{align}
&\bbE_{\bm{\xi},\bm{\delta}_\sigma}
\|\bm{\delta}_\sigma-\bmD(\bmE(\bmx_\epsilon+\bm{\delta}_\sigma))\|_2^2 \\
&\quad=
\bbE_{\bm{\xi},\bm{\delta}_\sigma}
\|\bm{\delta}_\sigma
-\bmD_1\bmE_1(\epsilon\bmJ_\iota\bm{\xi}+\bm{\delta}_\sigma)\|_2^2
+o(\epsilon^2+\sigma^2).
\end{align}
This is the linear $\bm\delta$-prediction problem of Proposition~\ref{prop:linear_orthogonal} with data covariance $\epsilon^2\bmC_T$. The optimal rank-$n_\perp$ map therefore retains the $n_\perp$ smallest eigenvalues of $\bmC_T$ and assigns gain
\begin{align}
\frac{\sigma^2}{\epsilon^2\lambda_i+\sigma^2}
\end{align}
to the eigendirection with eigenvalue $\lambda_i$. Proposition~\ref{prop:linear_orthogonal} then gives the stated composite map and all of its optimal factorizations.
\end{proof}

\subsection{Directional decomposition of the generative loss}

\begin{proof}[Proof of Proposition~\ref{prop:generative_gradient_decomposition}]
By Proposition~\ref{prop:general_parallel}, the Jacobian of a locally optimal manifold-parallel encoder has the form
\begin{align}
\bmJ_{\bmphi^\parallel}(\bmx_0)
=\bmR_\parallel\bmU_{n_\parallel}^\top.
\end{align}
The columns of $\bmU_{n_\parallel}$ are positive-eigenvalue eigenvectors of $\bmC_T$ and therefore lie in $\Image(\bmC_T)=T_{\bmx_0}\cM$. Since $\bmR_\parallel$ is invertible,
\begin{align}
\Image((\bmJ_{\bmphi^\parallel}(\bmx_0))^\top)
=\Span(\bmU_{n_\parallel})
\subseteq T_{\bmx_0}\cM.
\end{align}
Similarly, Proposition~\ref{prop:general_orthogonal} gives
\begin{align}
\bmJ_{\bmphi^\perp}(\bmx_0)
=\bmR_\perp\bmU_{-n_\perp}^\top.
\end{align}
Because $n_\perp\leq d-r$, the columns of $\bmU_{-n_\perp}$ can be chosen entirely from the zero-eigenspace of $\bmC_T$, which equals $N_{\bmx_0}\cM$. Thus,
\begin{align}
\Image((\bmJ_{\bmphi^\perp}(\bmx_0))^\top)
=\Span(\bmU_{-n_\perp})
\subseteq N_{\bmx_0}\cM.
\end{align}
For a target $\bmx$, define the two local feature-gradient components at $\hat{\bmx}=\bmx_0$ by
\begin{align}
\bmg^\parallel
&\coloneqq
(\bmJ_{\bmphi^\parallel}(\bmx_0))^\top
\bigl(\bmphi^\parallel(\bmx_0)-\bmphi^\parallel(\bmx)\bigr), \\
\bmg^\perp
&\coloneqq
(\bmJ_{\bmphi^\perp}(\bmx_0))^\top
\bigl(\bmphi^\perp(\bmx_0)-\bmphi^\perp(\bmx)\bigr).
\end{align}
The two image-space inclusions above imply $\bmg^\parallel\in T_{\bmx_0}\cM$ and $\bmg^\perp\in N_{\bmx_0}\cM$. Substituting these definitions into the gradient identity in \eqref{eq:feature_gradient} yields
\begin{align}
\nabla_{\hat{\bmx}}\cL
=\lambda^\parallel\bmg^\parallel
+\lambda^\perp\bmg^\perp,
\end{align}
which proves the claimed directional decomposition.
\end{proof}

\section{Scope of the Analysis and Limitations} \label{app:limitations}
Our characterization is local and holds to leading order in the corruption level; finite corruption, strong curvature, manifold boundaries, and imperfect optimization can therefore mix the two components. The corruption distribution, the branch dimensions $n_\parallel$ and $n_\perp$, and the loss weights $\lambda^\parallel$ and $\lambda^\perp$ must be selected without knowledge of the intrinsic dimension, and the feature geometry requires a separate representation-training stage for each domain. Our experiments concern sparse structured distributions and single-step generators; broader validation across architectures, modalities, and scale remains to be established.

\section{Experimental Detail} \label{app:details}

\subsection{Shared Setup for the Helix, Earth, and Torsion Experiments}

The helix, Earth, and torsion-angle experiments use a common training and evaluation framework. RFM, MF, and RMF use the same network architecture, consisting of four hidden layers of width $256$ and a Fourier time embedding with four frequency octaves. Each method is independently initialized with seed $0$, and evaluation uses exponential-moving-average weights with decay $0.999$. The flow-based methods are evaluated at $\mathrm{NFE}\in\{1,2,5,10\}$, whereas OTM and OTM + \method{} directly generate samples with one network evaluation.

For OTM and OTM + \method{}, we compute an exact Hungarian assignment between generated and real samples using ambient squared distance. Both methods therefore use identical generated-to-data pairings and differ only in their regression losses. For Earth and torsion angles, OTM + \method{} uses
\begin{align}
    \cL_{\mathrm{OTM+\method{}}}
    =
    \|\hat{\bmx}-\bmx_{\pi}\|_2^2
    +0.1\,h_c\!\left(\bmphi^\parallel(\hat{\bmx})
    -\bmphi^\parallel(\bmx_{\pi})\right)
    +0.1\,h_c\!\left(\bmphi^\perp(\hat{\bmx})
    -\bmphi^\perp(\bmx_{\pi})\right),
\end{align}
where $\bmx_{\pi}$ is the assigned real sample and
$h_c(\bmu)=\sqrt{\|\bmu\|_2^2+c^2}-c$ with $c=0.01$. The helix experiment omits the ambient term and uses feature weights $\lambda^\parallel=1$ and $\lambda^\perp=5$.

The feature encoder is trained separately and frozen during generative training. Its parallel decoder predicts the clean sample, while its orthogonal decoder predicts the normalized corruption $\bm\delta/\operatorname{rms}(\bm\delta)$. Normalizing the corruption prevents the orthogonal representation from collapsing when the corruption magnitude is small.

\subsection{Helix on Sphere}

We construct a spherical helix on $\mathbb{S}^2$ with winding parameter $5.0$, latitude oscillation $0.5$, and parameter range $[-\pi,\pi]$. Clean samples are obtained by applying tangent Gaussian perturbations with standard deviation $\sigma_{\mathrm{signal}}=0.01$ and are then embedded into $\mathbb{S}^{512}\subset\mathbb{R}^{513}$ through a fixed random orthonormal map from $\mathbb{R}^3$ to $\mathbb{R}^{513}$. Encoder inputs are formed by applying an additional Gaussian perturbation with per-coordinate standard deviation $\sigma_{\mathrm{perturb}}=0.10$ in the ambient tangent space of $\mathbb{S}^{512}$. The clean distribution therefore remains in the three-dimensional image of the embedding, while the corruption can leave this subspace.

We use $50{,}000$ data samples. All generative models are trained for $50{,}000$ iterations using RAdam with learning rate $10^{-3}$, batch size $256$, gradient clipping at $2.0$, and EMA decay $0.999$. The encoder uses $n_\parallel=3$ parallel and $n_\perp=509$ orthogonal features and is trained with clean- and corruption-prediction weights $1$ and $5$, respectively. OTM and OTM + \method{} use a three-dimensional Gaussian prior and exact Hungarian assignment. For MF, the prior standard deviation is $1/\sqrt{513}$ so that its expected norm matches the unit-norm data.

Evaluation uses $2{,}048$ generated samples. The qualitative comparison in \Cref{fig:qualitative_results} uses $\mathrm{NFE}=1$ and projects samples back to the underlying $\mathbb{S}^2$ coordinates. We additionally monitor sliced $W_2^2$, energy distance, distance to the underlying helix, displacement outside the embedding subspace, and coverage along the helix.

\subsection{Earth and Climate Science}

The Earth benchmark contains event locations on $\mathbb{S}^2$ for volcanoes, earthquakes, floods, and wildfires, with $827$, $6{,}120$, $4{,}875$, and $12{,}809$ records, respectively. We use deterministic $80/10/10$ train, validation, and test splits with seed $42$. Duplicate coordinates are retained during training because repeated events at one location encode the empirical event density.

Clean locations are corrupted in the tangent space of $\mathbb{S}^2$ using Gaussian noise with standard deviation $0.15$. The encoder uses $n_\parallel=n_\perp=2$ and equal clean- and corruption-prediction weights. It is trained for $10{,}000$ iterations with batch size $2{,}048$. Generative models are trained for $20{,}000$ iterations using Adam with learning rate $10^{-3}$ and EMA decay $0.999$. Flow-based methods use batch size $2{,}048$, while the exact-Hungarian OTM variants use batch size $256$.

We report geodesic $W_1$ distance between generated and real event distributions. The cost is the great-circle distance on $\mathbb{S}^2$, and the empirical transport problem is solved by exact Hungarian assignment. Datasets with more than $2{,}048$ records use a fixed seeded reference subsample, shared across all methods and checkpoints. Distances are converted from radians to degrees for \Cref{tab:error}. The qualitative Earthquake comparison in \Cref{fig:qualitative_results} uses $4{,}096$ generated samples at $\mathrm{NFE}=1$.

\subsection{Protein and RNA Torsion Angles}

We use the torsion-angle benchmarks of Riemannian Flow Matching~\citep{chen2024flow}, following the preprocessing of Riemannian diffusion models~\citep{huang2022riemannian,debortoli2022riemannian}. The benchmark contains four two-dimensional Ramachandran distributions and one seven-dimensional RNA torsion distribution:
\begin{center}
\begin{tabular}{lcc}
\toprule
\textbf{Dataset} & \textbf{Manifold} & \textbf{Number of records} \\
\midrule
General  & $\mathbb{T}^2$ & $137{,}718$ \\
Glycine  & $\mathbb{T}^2$ & $13{,}183$ \\
Proline  & $\mathbb{T}^2$ & $7{,}618$ \\
Pre-Pro  & $\mathbb{T}^2$ & $6{,}900$ \\
RNA      & $\mathbb{T}^7$ & $9{,}457$ \\
\bottomrule
\end{tabular}
\end{center}
Each angle is represented by its cosine and sine, embedding $\mathbb{T}^d$ isometrically into $\mathbb{R}^{2d}$ without an angular discontinuity. Corruption is applied independently to each angle using wrapped Gaussian noise with standard deviation $0.15$ radians.

All generative models use batch size $256$ and are trained for $20{,}000$ iterations using Adam with learning rate $10^{-3}$ and EMA decay $0.999$. The encoder is trained separately for $10{,}000$ iterations with batch size $2{,}048$. Its parallel and orthogonal feature dimensions are: $n_\parallel=n_\perp=2$ for $\mathbb{T}^2$ and $n_\parallel=n_\perp=7$ for $\mathbb{T}^7$.

We evaluate geodesic $W_1$ using exact Hungarian assignment under the flat-torus distance, defined as the Euclidean norm of the wrapped per-angle differences. The generated and test distributions are restricted to at most $2{,}048$ samples using a fixed seeded subsample, and the resulting distances are reported in degrees in \Cref{tab:error}. All values in the main table use $\mathrm{NFE}=1$.

\subsection{SCOPe Protein Backbone Generation}

\paragraph{Dataset and generator.}
We use $3{,}673$ SCOPe backbones containing between $60$ and $128$ residues~\citep{chandonia2022scope}, matching the dataset used by RMF~\citep{woo2026riemannianmeanflow}. Each backbone is represented by residue-local rigid frames in $\mathrm{SE}(3)^L$. The generator follows the FrameFlow architecture~\citep{yim2023fast}, using node and edge dimensions $256$ and $128$, six invariant-point-attention blocks, eight attention heads, and self-conditioning.

Both the baseline and \method{} models use the RMF SplitMeanFlow objective, comprising a flow-matching anchor and a semigroup composition term. Rotational and translational losses are weighted by $1$ and $2$, respectively, with translations scaled by $0.1$. The time floor is $0.01$, and both objective terms are clipped at $10$ after $10{,}000$ iterations. Models are trained with AdamW using learning rate $10^{-4}$, weight decay $0.01$, gradient clipping at $1.0$, and EMA decay $0.9999$. Batches contain at most $54$ proteins under a length-dependent memory budget. Although training runs for up to two million iterations, the reported steric evaluation uses the one-million-step checkpoints.

\paragraph{Frozen protein encoder.}
The protein feature encoder is a split-latent U-Net over backbone frames with $64$ feature planes, encoder depths $[4,4,4,4]$, and decoder depths $[2,2,2]$. It is trained for $250{,}000$ iterations using batch size $2{,}048$, learning rate $10^{-4}$, and equal clean- and corruption-prediction weights. Structured corruptions include rigid segment and loop displacements, chain breaks, and atom-level bond, angle, and torsion perturbations. Each of the $13$ corruption operations is independently activated according to a base probability of $0.15$, producing approximately two corruptions per sample. The EMA encoder checkpoint is frozen during RMF training.

The baseline sets the feature-semigroup weight to zero. The \method{} model adds the decomposed feature semigroup term with weight $0.0029$ while retaining the standard RMF objective. All other architecture, optimization, conditioning, and sampling settings are identical between the two runs.

\paragraph{Designability and self-consistency.}
For each checkpoint, backbone length, and NFE, we generate $64$ backbones from identical priors for both models. Following standard protein-backbone evaluation, ProteinMPNN~\citep{dauparas2022robust} generates eight sequences per backbone at temperature $0.1$, which are refolded using ESMFold~\citep{lin2023evolutionary}. Self-consistency RMSD is the minimum RMSD over the eight refolded sequences, and a backbone is considered designable when this value is below $2.0$~\AA.

\paragraph{Steric validity.}
We evaluate the one-million-step checkpoints at lengths
\[
\{60,68,76,84,92,100,108,116,124,128\}
\]
and $\mathrm{NFE}\in\{1,5,10\}$. For each length and NFE, we draw eight independent batches of $128$ samples, using identical priors for the baseline and \method{} within each batch.

For every generated structure, we compute the maximum van der Waals overlap
\begin{align}
    \max_{(a,b)}\left(r_a+r_b-\|\bmx_a-\bmx_b\|_2\right)
\end{align}
over backbone $\mathrm{N}$, $\mathrm{C}_{\alpha}$, and $\mathrm{C}$ atoms belonging to residues separated by at least two sequence positions. We use Bondi radii of $1.55$~\AA{} for nitrogen and $1.70$~\AA{} for carbon~\citep{bondi1964van}.
Backbone oxygen atoms are excluded because their placement is determined by the PDB-writing convention rather than directly emitted by the model.
The reported statistic is the median maximum overlap within each batch.
Real SCOPe backbones within four residues of each target length provide the reference distribution and have median overlaps of $0.28$--$0.30$~\AA. 

\end{document}